\newtheorem{theorem}{Theorem}
\newtheorem{definition}{Definition}
\newtheorem{lemma}{Lemma}
\newtheorem{assumption}{Assumption}
\title{\LARGE \bf
Distributed Invariant Kalman Filter 
for Object-level Multi-robot Pose SLAM
}
\author{Haoying Li$^{\rm a}$, Qingcheng Zeng$^{ \rm b}$, Haoran Li$^{ \rm a}$, Yanglin Zhang$^{\rm a}$, and Junfeng Wu$^{\rm a}$
\thanks{
$^{\rm a}$:~School of Data Science, The Chinese University of Hong Kong (Shenzhen), Shenzhen, China.
 $^{\rm b}$:~System Hub, Hong Kong University of Science and Technology (Guangzhou), Guangzhou, China.
 Emails: \{haoyingli, haoranli, yanglinzhang\}@link.cuhk.edu.cn (H.Y. Li, H.R. Li, Y. Zhang), qzeng450@connect.hkust-gz.edu.cn(Q, Zeng), junfengwu@cuhk.edu.cn (J. Wu).
} 			
}    
\begin{document}

\maketitle
\thispagestyle{plain}
\pagestyle{plain}

\begin{abstract}
Cooperative localization and target tracking are essential for multi-robot systems to implement high-level tasks. To this end, we propose a distributed invariant Kalman filter based on covariance intersection for effective multi-robot pose estimation. The paper utilizes the object-level measurement models, which have condensed information further reducing the communication burden. Besides, by modeling states on special Lie groups, the better linearity and consistency of the invariant Kalman filter structure can be stressed. We also use a combination of CI and KF to avoid overly confident or conservative estimates in multi-robot systems with intricate and unknown correlations, and some level of robot degradation is acceptable through multi-robot collaboration. The simulation and real data experiment validate the practicability and superiority of the proposed algorithm. The source code is available for download at: \href{https://github.com/LIAS-CUHKSZ/Distributed-object-based-SLAM}{https://github.com/LIAS-CUHKSZ/Distributed-object-based-SLAM}.
\end{abstract}

\section{INTRODUCTION}
The deployment of teams of cooperative autonomous robots has the potential to enable fast information gathering and more efficient coverage and monitoring of vast areas.  In multi-robot systems, high-level features offer advantages over low-level ones, such as broader loop closures between robots. Object-based models also reduce communication by avoiding the exchange of raw sensor data~\cite{choudhary2017distributed} in multi-robot systems.  We tackle the challenge of each robot estimating its trajectory while simultaneously estimating key object poses within a map. These estimates support high-level tasks like semantic scene understanding, sub-map fusion, and multi-robot cooperation for complex tasks like cargo transport.

Multi-robot pose estimation involves complex correlations from shared neighbor information and environmental measurements unrelated to motion noise. 
Designing a fully distributed filter is hindered because, although filtering methods execute quickly, they often struggle with unknown correlations. 
The Kalman filter (KF) and its variants are the most established methods, with numerous efforts made to adapt KF for distributed filtering. The consensus method~\cite{carli2008distributed} converges to a centralized manner but necessitates infinite communication steps. 
The Diffusion KF~\cite{cattivelli2010diffusion} uses convex combinations of estimates during diffusion but lacks covariance updates, rendering the stored covariance meaningless. 
Covariance Intersection (CI)~\cite{chen2002estimation} handles unknown correlations and delivers consistent estimates, though it can be overly conservative in cases of independence. 
Integrating CI with KF offers a more reliable approach. 
Zhu \emph{et al.}~\cite{zhu2020fully} developed a CI-based distributed information filter for localization and tracking, while Chang \emph{et al.}~\cite{chang2021resilient} introduced CIKF, a CI-based distributed Kalman filter that provides consistent multi-robot localization by allowing each robot to estimate the states of the entire team.
\begin{figure}[t]
    \centering
    \includegraphics[width=0.86\linewidth]{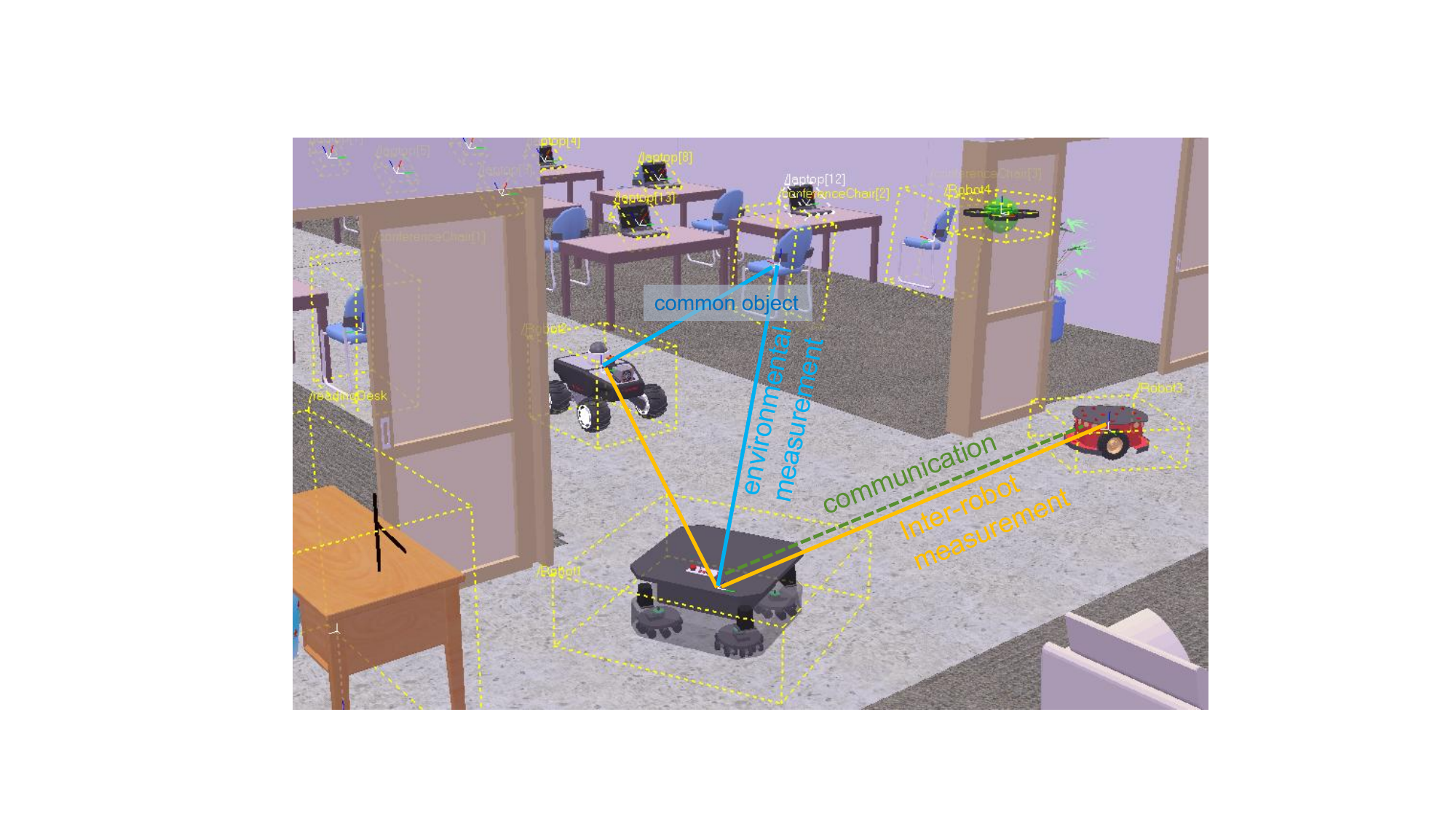}
    \caption{Illustration of distributed object pose SLAM problem.}
    \label{fig:illustration}
\end{figure}

To further enhance pose estimation within multi-robot systems, especially when dealing with movements in the special Euclidean Lie Group, addressing vector space approximation errors is crucial. The invariant extended KF~(InEKF), based on invariant observer theory~\cite{barrau2016invariant}, leverages the log-linear property of invariant errors in group affine systems to improve pose estimation accuracy and consistency, avoiding the inconsistencies in standard EKF. InEKF is also employed in object-based SLAM~\cite{9667208}\cite{jung2022gaussian} and achieved excellent results. InEKF is also extended to distributed filtering. Xu \emph{et al.}~\cite{10232373} introduced a CI-based distributed InEKF for cooperative target tracking on Lie groups, yet neglected relative noises. 
Some related work follows. Cossette et al.~\cite{cossette2024decentralized} proposes a distributed state estimation method using pseudomeasurements and preintegration. In contrast, we use an invariant framework to improve pose estimation accuracy and consistency. Li et al.~\cite{li2021joint} extends SCI to Lie groups for multi-robot localization. Compared to this, we offer a more detailed analysis of observation correlations and leverage object-level observations with the potential to build semantic maps.

In this paper, we present a fully distributed invariant KF based on CI, termed DInCIKF for multi-robot object-level pose SLAM, enabling efficient estimation of both robot states and object states with minimal communication overhead. The effectiveness of our approach is validated through extensive simulations and real-world datasets. The key contributions of this work are summarized as follows:

\begin{enumerate} \item We propose a novel, fully distributed algorithm DInCIKF to estimate robot trajectories and 6-DOF object poses. This approach mitigates the risks of overconfidence and excessive conservatism in the estimates. \item Our state representation is formulated on a specific Lie group, with uncertainties modeled via Lie algebra. This ensures improved linearity, enhancing both the accuracy and consistency of the algorithm. \item The communication framework leverages object-based information, resulting in a highly efficient system that minimizes communication frequency and bandwidth requirements. \end{enumerate}
\section{Preliminaries}
\subsection{Covariance intersection}
CI is a conservative fusion approach that ensures consistent estimates without requiring knowledge of the exact correlations. Given estimates $\hat{x}_1,\ldots, \hat{x}_n$ of $x$ with estimation error covariance $\hat{P}_1,\ldots, \hat{P}_n$, CI fuses them as follows:
\begin{equation}~\label{CI}
\begin{aligned}
 P_{\mathrm{CI}}^{-1}=\sum_{i=1}^{n}\alpha_i \hat{P}_i^{-1}, \quad P_{\mathrm{CI}}^{-1} \hat{x}_{\mathrm{CI}} = \sum_{i=1}^n \alpha_i \hat{P}_i^{-1} \hat{x}_i,
\end{aligned}    
\end{equation}
where $0\leq \alpha_i \leq 1$ and $\sum_{i=1}^n\alpha_i=1$.
The coefficients can be determined by minimizing metrics such as
$\operatorname{tr}(P_{\mathrm{CI}})
$.
\subsection{Matrix Lie group}
Rigid body motion is effectively modeled by matrix Lie groups. A matrix Lie group $G$ is a group as well as a smooth manifold where its elements are matrices. A Lie algebra identified on the identity matrix consists of a vector space $\mathfrak{g}$ with the Lie bracket. The matrix Lie group $SE_n(3)$ is comprised of a rotation matrix $R\in SO(3)$ and $n$ vectors $t_i \in \mathbb{R}^3$, where $SO(3)$ is the set of rotation matrices:$$
SO(3) \triangleq \{R\in \mathbb{R}^3| RR^T = I, \operatorname{det}(R)=1 \}.
$$
An element in Lie algebra of $SO(3)$ is given by:
\begin{small}$$
\omega ^\wedge=\left[\begin{array}{c}
    \omega_1\\ \omega_2 \\
    \omega_3
\end{array}\right] ^\wedge    =\left[\begin{array}{ccc}
0 & -\omega_3 & \omega_2 \\
\omega_3 & 0 & -\omega_1 \\
-\omega_2 & \omega_1 & 0
\end{array}\right].
$$\end{small}

The matrix representations of an element in $SE_n(3)$ and the element in corresponding Lie algebra $\mathfrak{se}_n(3)$ are:
 \begin{small}
$$
\left[\begin{array}{c|lll}
R & t_1&\cdots&t_n\\
\hline
0_{n\times3} &~&I_n
\end{array}\right], \quad
\left[\begin{array}{c|lll}
\omega^\wedge & v_1&\cdots&v_n\\
\hline
0_{n\times 3} &~&0_{n\times n}
\end{array}\right].
$$\end{small}

We abuse the notation $(\cdot)^\wedge$ to denote the mapping from the vector space to the corresponding Lie algebra. The inverse map of $(\cdot)^\wedge$ is denoted as map $(\cdot)^\vee$. The exponential map is defined as $\operatorname{exp}(\xi)\triangleq \operatorname{exp}_m(\xi^\wedge): \mathfrak{g} \rightarrow G$, where $\exp_m(\cdot)$ denotes the matrix exponential. The inverse mapping of exponential is the Lie logarithm, denoted by $\log(\cdot)$.

For $X \in G$ and $\xi \in \mathfrak{g}$, the adjoint map is as follows:
$$
\operatorname{Ad}_X : \mathfrak{g} \rightarrow \mathfrak{g};\quad  \xi ^ \wedge \mapsto \operatorname{Ad}_X (\xi^\wedge)\triangleq X \xi^\wedge  X^{-1}, 
$$
which can yield formulation:
$
  X \operatorname{exp}(\xi ) =\operatorname{exp}(\operatorname{Ad}_X \xi ) X. 
$
Using the Baker-Campbell-Hausdorff (BCH) formula to compose matrix exponentials is complex. For $x^{\wedge}, y^{\wedge} \in \mathfrak{g}$ and $\|y\|$ small, their composed exponential approximates to
\begin{align}
 \exp (x) \exp (y) \approx \exp (x + \operatorname{dexp}_{-x}^{-1} y),~\label{eq:exp_approx_smallbig} 
\end{align}
where $\operatorname{dexp}_x$ is the left Jacobian of $x$.
\section{Problem Formulation}
We consider a multi-robot system consisting of $N$ robots and several object features in the environment. Each robot aims to estimate its trajectory and the poses of key objects. 

We follow the convention of using right subscripts for relative frames and left superscripts for base frames, omitting the global frame. We use a superscript $(\cdot)^{(j)}$ to indicate the source of this estimate is robot $j$. 
\subsection{Kinematics}
Let $p_i$, $R_i$, and $v_i$ denote the position, orientation, and velocity of robot $i$ in the global frame. The robot own state is represented as
\begin{small}
$$
X_{i}\triangleq\left[\begin{array}{c|l}
R_i & p_i\quad v_i\\
\hline
0_{2 \times 3} &~~I_2 
\end{array}\right]\in SE_2(3)
\text{ or }\left[\begin{array}{cc}
R_i & p_i\\
0_{1\times 3}&1
\end{array}\right]\in SE(3).$$ \end{small}

An object feature is represented as a static 6-DOF pose in the environment. Object $f_s$'s pose~(orientation and position) in the global frame is denoted by:
$$
T_{f_s} \triangleq \begin{bmatrix}
    R_{f_s}& p_{f_s}\\0_{1\times3}&1
\end{bmatrix} \in SE(3),
$$
where $R_{f_s} \in SO(3)$ and $p_{f_s} \in \mathbb{R}^3$ are respectively the orientation and the position of the feature in the global frame. The initial state of robot $i$ is given by $Y_i = X_i$. When the robot is tracking $K$ objects, the state expands to
$$
Y_i = \{X_i, T_{f_1}^{(i)},\cdots, T_{f_K}^{(i)}\}.
$$
The relevant state space is isometric to $SE_2(3)$ (or $SE(3)$) $\times (SE(3))^K$. It is assumed that the objects in the environment remain static. For robots, following \cite{barrau2016invariant}, we consider the kinematics of group affine systems in the form of
\begin{equation}
  \dot{{X}}=f_u({X}),  
\end{equation}
where $f_u$ satisfies the so-called group affine property for any $u \in \mathcal{U}$, $X_1, X_2 \in S E_2(3) (\text{or } SE(3))$,
$
f_u\left({X}_1{X}_2\right)={X}_1 f_u\left({X}_2\right)+f_u\left({X}_1\right) {X}_2-{X}_1 f_u(I){X}_2.
$
\subsection{Measurement Models}
We consider object-level measurements, which represent relative poses of object features in the current robot frame. The first type of the measurement model describes environmental measurements of object features modeled as:
\begin{equation}~\label{eq:z_if}
^iT_{f_s,m}\triangleq h_{i f}(X_i, T_{f_s})=  T_i^{-1} \exp (n_{i f})  T_{f_s} ,
\end{equation}
where the measurement noise is modeled as white Gaussian noise $n_{if}\sim \mathcal{N}(0, R_{if}) $ and we use $T_i \in SE(3)$ denote the pose of robot $i$. 

The second type is the inter-robot measurement between neighboring robots $i$ and $j$, which is modeled as:
\begin{equation}~\label{eq:z_ij}
^iT_{j,m}\triangleq h_{i j}(X_i, X_j)= T_i^{-1} \exp(n_{ij}) T_j , \quad j \in \mathcal{N}_i,
\end{equation}
where the relative measurement noise $n_{i j} \sim \mathcal{N}(0, R_{i j})$.
\subsection{Problem }
The communication network of a multi-robot system is defined as \(\mathcal{G}=(\mathcal{V}, \mathcal{E})\), where \(\mathcal{V}\) includes $N$ robots and an additional node \(o\) representing absolute pose information in the environment. The communication links are denoted by \(\mathcal{E}\), with \((j,i) \in \mathcal{E}\) indicating that robot \(i\) can receive information from robot \(j\), and \((o,i) \in \mathcal{E}_{oi}\) signifying that node \(i\) has access to absolute pose information. The set of neighboring robots for robot \(i\) is denoted as \(\mathcal{N}_i\). 
We assume that \(\mathcal{G}\) is time-invariant and make the following assumptions.
\begin{assumption}~\label{a:measure_can_send}
If robot $j$ is capable of measuring the relative pose of robot $i$, it can send information to robot $i$, that is, $(j, i)\in\mathcal E$.
\label{a1}
\end{assumption}
\begin{assumption}\label{a:spanning_tree}
The digraph ${\mathcal{G}}$ contains a directed spanning tree rooted at node $o$. 
\label{a2}
\end{assumption}

We are interested in \textit{designing a distributed pose estimator to compute $Y_i$ for each robot $i$} using the available measurements and leveraging occasional communication with other robots. The problem of interest is shown in Fig.\ref{fig:illustration}.
\section{Methodology}
The distributed estimator procedure consists of three steps. We use $\bar{(\cdot)}$ to denote the estimate after local invariant error preintegration, $\breve{(\cdot)}$ for the estimate following the invariant KF update, and $\hat{(\cdot)}$ for the estimate after communication with neighbors. An overview of the system is provided in Fig.\ref{fig:system}.
\begin{figure}
    \centering
    \includegraphics[width=0.9\linewidth]{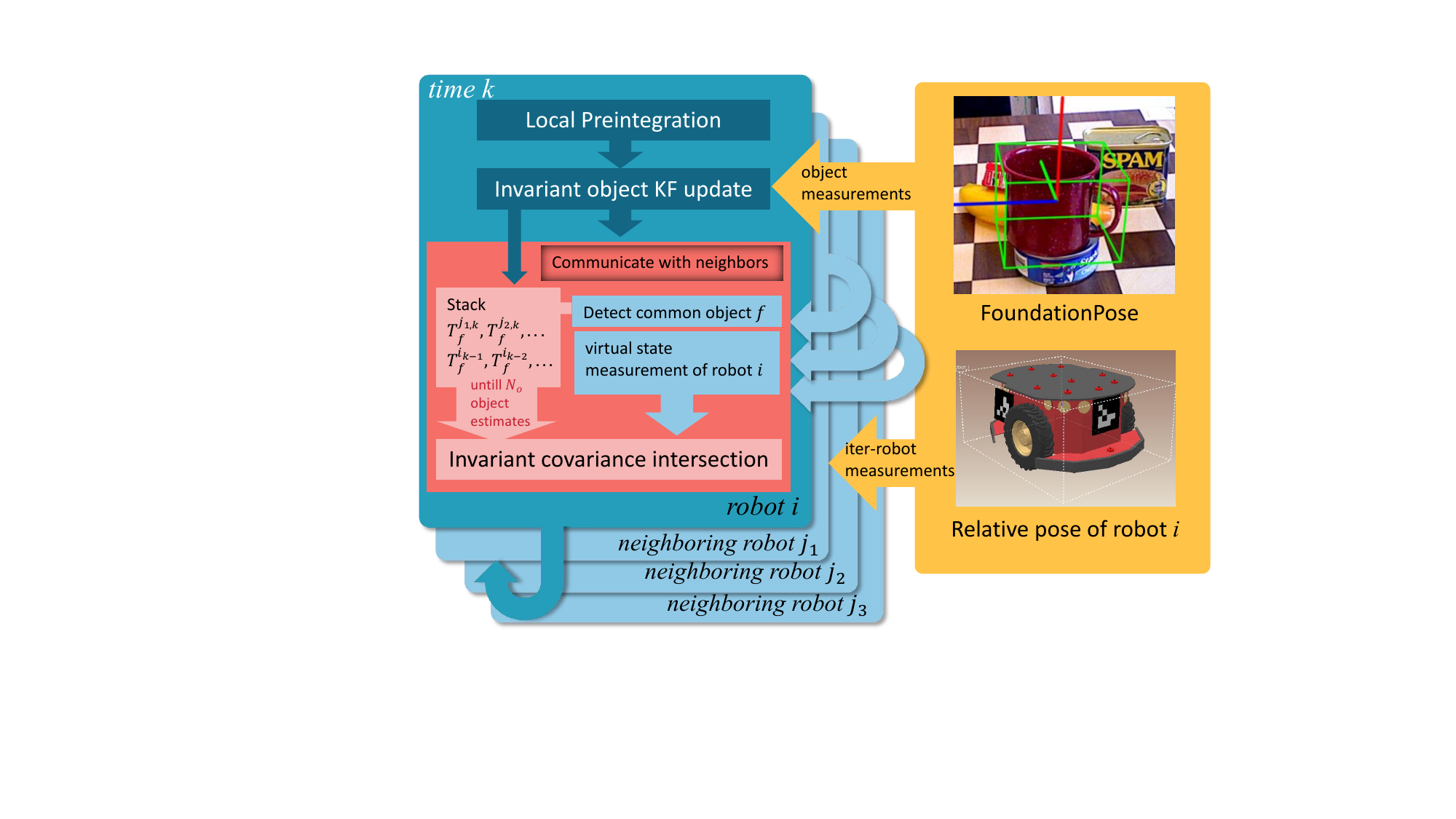}
    \caption{System overview.}
    \label{fig:system}
\end{figure}
\subsection{Local Invariant Error preintegration}
Define the logarithmic right invariant error as
$\xi_i =\operatorname{log}(X_i \bar{X}_i^{-1})$. Similarly, the logarithmic right invariant error from $\bar{T}_{f_s}^{(i)}$ to $T_{f_s}$ is $\delta_{f_s}^{(i)} \triangleq \operatorname{log}(T_{f_s} (\bar{T}_{f_s}^{(i)})^{-1})$. The logarithmic right invariant error for the whole state $Y_i$ is then as follows 
$$
\zeta_i = [\xi_i~ \delta_{f_1}^{(i)} ~\cdots ~ \delta_{f_K}^{(i)} ] \in \mathfrak{s e}_n(3) \times (\mathfrak{s e}(3))^K, n=1 \text{ or }2.
$$
We use $d$ to denote the dimension of $\xi_i$, with $d=6$ for $n=1$ and $d=9$ for $n=2$ in the following context.
Many common process models in robotics, which conform to affine group properties, can utilize preintegration. Here are a few examples that illustrate the preintegration of invariant error and Lie group state.
\subsubsection{Example 1} Direct pose increment on $SE(3)$. This first-order process model employs the relative motion increment (RMI) for preintegration~\cite{sola2018micro}~\cite{cossette2024decentralized}, with the RMI 
can be derived from constant velocity assumptions~\cite{9667208} or odometry between frames. 
The kinematics of robot $i$ given by:
\begin{align*}
    \dot{p}_i &= \omega_i^\wedge + v_i,\quad\dot{R}_i= \omega_i^\wedge R_i,
\end{align*}
where $v_i$ and $\omega_i$ are the translational and rotational velocities. Let $u_i\triangleq[v_i \quad \omega_i]$ be the generalized velocity, it follows
\begin{equation}~\label{eq:process1}
    \dot{{T}_i}=u_i^{\wedge} {T_i}.
\end{equation}
If we assume quantities remain constant between discrete times, then the discrete state preintegration is as follows:
$$
\bar{T}_i(k)  = \hat{T}_i(k-1) \Delta T_{i,m}(k)= \hat{T}_i(k-1)\operatorname{exp}( u_i(k)\Delta t),
$$
where the $\Delta t$ is the constant sampling interval. The control input $\Delta T_{i,m}(k)=\operatorname{exp}(n_{i,u}) \Delta T_i(k)$ with the process noise $n_{i,u} \sim \mathcal{N}(0, S_{i,u})$. We can obtain
\begin{align*}
    \bar{T}_i(k) &= \exp{(-\hat{\xi}_i(k-1))}T_i(k-1) \exp{(n_{i,u}) } \Delta T_i(k)\\
    & = \exp{(-\hat{\xi}_i(k-1))}\exp{(\operatorname{Ad}_{ T_i({k-1})}(n_{i,u}))}\Delta T_i(k).
\end{align*}
Following~\cite{li2021joint}, we use the first-order estimate and then the propagation of the estimation error covariance $P_{ii}$ corresponding to $\xi_i$ is as follows:
\begin{align*}
    \bar{P}_{ii}(k)&=\hat{P}_{ii}(k-1) + \operatorname{Ad}_{\hat T_i(k-1)} S_{i,u} \operatorname{Ad}_{\hat T_i(k-1)}^\top\\
    &=F_i\hat{P}_{ii}(k-1)F_i^\top + S_i(k),
\end{align*}
where $F_i \triangleq I_6$ and $S_i(k) \triangleq \operatorname{Ad}_{\hat T_i(k-1)} S_{i,u}\operatorname{Ad}_{\hat T_i(k-1)}^\top$.
\subsubsection{Example 2} IMU preintegration on $SE_2(3)$. This is a second-order process model, which is the most well-known usage of preintegration. The 3D kinematics for robot $i$ is:
\begin{align}~\label{eq:IMU_process}
\dot{R}_i= R_i \omega_i^\wedge,\quad \dot{p}_i &= v_i, \quad \dot{v}_i = a_i,\\
\omega_{i,m}=\omega_i+b_g+n_g,\quad a_{i,m}& = R_i^T (a_i-g)+b_a+n_a,~\notag
\end{align}
where $w_i$ and $a_i$ are angular velocity and acceleration, measured by the IMU. The IMU measurements $w_{i,m}$ and $a_{i,m}$ include zero-mean white Gaussian noises $n_g$, $n_a$, and $g$ denotes the gravity. The gyroscope and accelerometer bias $b_g$, $b_a$ are driven by white Gaussian noises $n_{b_g}$ and $n_{b_a}$, that is, $ \dot{b}_g=n_{b_g}, \dot{b}_a = n_{b_a}
 $. We can follow Appendix A in~\cite{hartley2020contact} to discretely propagate the group state without process noises to obtain $\bar{X}_i(k)$ from $\hat{X}_i(k-1)$. 

The discretization of  dynamics of $\xi_i$ is given by~\cite{li2022closed}:
\begin{align*}~\label{eq:xi_prop_discrete}
  \bar{\xi}_i(k)=F_i\hat{\xi}_i(k-1)+n_{i}(k),
\end{align*}
with $F_i \triangleq $ \begin{footnotesize}$\begin{bmatrix}
    I & 0 & 0\\
    \frac{1}{2}g^\wedge \Delta t^2&I& I \Delta t\\ g^\wedge \Delta t &0&I
\end{bmatrix}$\end{footnotesize}, $n_{i}(k) \sim \mathcal{N}(0_{9 \times 1}, S_i(k))$ can be obtained by \cite{li2022closed}. We thus use the following equation to approximate the evolution of the estimation error covariance:
\begin{equation}~\label{pred_P}
    \bar{P}_{ii}(k)=F_i \hat{P}_{ii}(k-1) F_i^\top+S_i(k) .
\end{equation}
Assuming constant sampling intervals, the $F_i$ of the robots are identical and time-invariant.
When estimating the pose of an object simultaneously, the $F_i$ needs to be augmented. The preintegration after augmentation is described as follows. Using $P_i$ denote the estimation error covariance of $\zeta_i$ including estimating $K$ objects. Then $P_i$ propagates as
\begin{equation}~\label{eq:whole_propagate}
\bar{ P}_i(k) = A_i \hat{P}_i (k-1) A_i^\top+Q_i(k),    
\end{equation}
where $A_i = \operatorname{diag}(F_i, I_{6K})$, $Q_i(k) = \operatorname{diag}(S_i(k), 0_{6K\times6K})$.
\subsection{Invariant Kalman Filter Update}
Next, we use the invariant Kalman filter update step to fuse observations from the environment that are not coupled with process noise.
\subsubsection{New Object Initialization}
After confirming the reliability and importance of the object, the robot will initialize it based on the measurement from the first observation. Note that the measurement used for initializing the object's state will not be utilized in the update process.

At time step $k$, if a new object feature $f_{K+1}$ is recognized, the state of the feature will be initialized as follows:
\begin{align}~\label{eq:Poseprop}
  \breve{T}_{f_{K+1}}(k)& =  \bar{T}_i(k)   ^i T_{f_{K+1},m}(k),
\end{align}
Consequently, the estimation error covariance will evolve as follows:
\begin{align}
\breve{P}_{f_{K+1}}^{(i)}(k)& =G \bar{P}_i(k) G^\top + R_{if},~\label{eq:initial}    \\
\breve{P}_{i:f_K, f_{K+1}}&= \bar{P}(k) G_f(k)^\top,~\notag
\end{align}
 where $G\triangleq[I_{6}~ 0_{6\times 3}]$ for $X_i\in SE_2(3)$, $G\triangleq I_6$ for $X_i \in SE(3)$ and $G_f(k)\triangleq \left[\begin{array}{cc}
      \multicolumn{2}{c}{I_{d+6K}}  \\
     G & 0_{6\times 6K}
 \end{array}\right]$.
\subsubsection{Local Update}
For previously initialized objects, their measurements can be used for updates. Define the residual as:
\begin{equation}
r_{i f_s}= \log( ^{i}T_{f_s,m}  ( 
\bar{T}_i^{-1}\bar{T}_{f_s}^{(i)})^{-1} )
\end{equation}
For residual at time $k$, we have
$$
r_{i f_s}(k) = H_{i f_s}(k) \bar{\zeta}_i(k) + v_i(k),
$$
where the Jacobian matrix of $r_{if_s}(k)$ with respect to $\xi$ is
$$
H_{i f_s}(k) = [J_i(k)~ 0~\cdots~J_{i f_s}(k)~\cdots~0],
$$
where $J_{i f_s}(k) = -\operatorname{Ad}_{\bar{T}_i(k) }$, $J_i(k) =\operatorname{Ad}_{\bar{T}_i(k) }J $ and $v_i(k) \sim \mathcal{N}(0, R_{if}'(k))$ with $R_{if}'(k)= \operatorname{Ad}_{\bar{T}_i(k) } R_{if} \operatorname{Ad}_{\bar{T}_i(k) }^{\top} $. The calculation details see Appendix.\ref{app:Jacob}.

The stacked forms of the residual, Jacobian matrix and the noise covariance are denoted by $r_i(k)$, $H_i(k)$ and $R_i(k)$. For robots that have access to their own absolute pose with measurement model $f_o(X_i)$ and residual $r_o(X_i)$, the observation matrix is $H_{io}=[J_{io}~ 0]$, where $J_{io}$ is the Jacobian matrix of $r_o$ with respect to $\xi_i$. Absolute measurements are also taken into account. Subsequently, we then use the standard KF update procedure:
\begin{align}~\label{eq:updateobjp}
K_i(k) &= \bar{P}_i(k) H_i(k)^\top( H_i(k)  \bar{P}_i(k) H_i(k)^\top +R_i(k) ) ^{-1},~\notag\\
\breve{P}_i(k) &= (I-K_i(k) H_i(k) )\bar{P}_i(k),\\
\breve{\zeta}_i(k) & = K_i(k)r_i(k) + \bar{\zeta}_i(k)= \Delta{\zeta}_i(k)+ \bar{\zeta}_i(k).~\notag
\end{align}
With $\Delta\zeta_i(k)=[\Delta\xi_i, \Delta\delta_{f_i}^{(i)}, \cdots, \Delta\delta_{f_K}^{(i)}]$, $K$ is the number of estimated objects, mapping back to Lie group, we get 
\begin{align}~\label{eq:mapback}
  \breve{X}_i(k)&= \exp{(\Delta\xi_i(k))}\bar{X}_i,  \\
  \breve{T}_{if_s}^{(i)}(k)&= \exp{(\Delta\delta_{f_s}^{(i)}(k))}\bar{T}_{if_s}(k), s=1,\ldots,K.~\notag
\end{align}

\begin{algorithm}[t] 
\caption{DInCIKF for Object-based Pose SLAM} 
    \label{algorithm} 
    \begin{algorithmic}
    \REQUIRE
    (local) $\hat{Y}_i(k-1), \hat{P}_i(k-1), u_i(k),     ^iT_{f,m}(k); $\\
    (neighbors broadcast) $\breve{T}_{\alpha}^{(j)}(k), \breve{P}^{(j)}_{\alpha}(k), j\in \mathcal{N}_i$.
    \ENSURE $\hat{Y}_i(k), \hat{P}_i(k)$.
    \STATE \textbf{\underline{Step 1: Preintegration.}} 
    \STATE Using the control input $u_i(k)$ to preintegrate $\bar{Y}_i(k)$ and $\bar{P}_i(k)$ using \eqref{eq:IMU_process} or \eqref{eq:process1} and \eqref{eq:whole_propagate}.
    \STATE \textbf{\underline{Step 2: Invariant object update.}}
    \STATE (1) Initialize the first-seen objects's state $T_{f_{K+1}}^{(i)}$ and $P_{f_{K+1}}^{(i)}$ following~\eqref{eq:Poseprop} and~\eqref{eq:initial};
    \STATE (2) Update the $\breve{Y}_i(k)$ and $\breve{P}_i(k)$ following~\eqref{eq:mapback}, \eqref{eq:updateobjp}.
    \STATE \textbf{\underline{Step 3: CI update.}}
    \STATE (1) Compute $\tilde{T}_{\alpha}(k) $ and $\tilde{P}_{\alpha}(k)$ following~\eqref{eq:fuseCIupda}. 
    \STATE (2) Compute $\tilde{r}_i(k)$ and $\tilde{R}_i(k)$ following~\eqref{eq: res_neigh};
    \STATE (3) Update the $\hat{Y}_i$ and $\hat{P}_i$ following~\eqref{eq:mapback}and~\eqref{eq:CIup}.
\end{algorithmic} 
\end{algorithm}
\subsection{Communicating with Neighbors}
\subsubsection{Neighboring Pose Fusion} 
During the communication step, robot $i$ receives state estimates from its neighbors, along with pose estimates of common objects. 
To incorporate these estimates as direct measurements, inspired by~\cite{xu2023distributed}, robot $i$ uses a fast CI strategy on the Lie group, accounting for correlations in its neighbors' measurements.

The pose fusion aims to fuse the set of pose estimations from robot $j$ denoted by $T_{\alpha}^{(j)}, \alpha = i \cup \{f_c\}$ with corresponding covariance $P_{\alpha}^{(j)}$, where $c\in I_i$, and $I_i$ is the index set of the common estimated objects of robot $i$ with its neighbors. 

 For the state of robot $i$, neighbor $j$ can locally estimate $i$'s pose by $$
\breve T_i^{(j)}(k) = \breve{T}_j(k) ^{j} T_{i,m}(k),
$$ 
with covariance $\breve{P}_{i}^{(j)}(k) = G \breve{P}_{jj}(k) G^\top +R_{ij}$. 

For commonly observed objects, the poses awaiting fusion include robot $i$'s historical estimates of the object and its neighbors' estimates of the same object. Fusion of $\breve{T}_{f_c}^{(j)}$ covariance $\breve{P}_{f_c}^{(j)}$ with historical $i$'s estimation will only begin once $ N_o$ pose estimates for the object have been collected. 

The information received from the neighbors is fused to become a virtual direct observation that forms part of the $Y_i$:
\begin{align}~\label{eq:fuseCIupda}
&\tilde{P}_\alpha(k)^{-1}=\sum_{j \in \mathcal{N}_i} \beta_{j,k} \breve{P}_\alpha^{(j)}(k) ^{-1} ,~\notag\\
&\tilde{\delta}_\alpha(k)=\tilde{P}_\alpha(k)\sum_{j \in \mathcal{N}_i} \beta_{j,k}(\breve{P}_\alpha^{(j)}(k)^{-1} \log (\breve{T}_\alpha(k) {\breve{T}_{\alpha}}^{(j)}(k)^{-1}  ))~\notag \\
& \tilde{T}_\alpha(k) = \exp{(\tilde{\delta}_\alpha(k))}\breve{T}_\alpha(k),
\end{align}
where $\beta_{j,k}$ is chosen by fast CI rule~\cite{niehsen2002information}. Note that past estimates from robot \(i\) may sometimes be included in the object's state estimation in a similar way.
\begin{figure*}[ht] 
    \centering 
    \begin{minipage}[t]{0.09\textwidth}
        \centering
        \includegraphics[width=\textwidth]{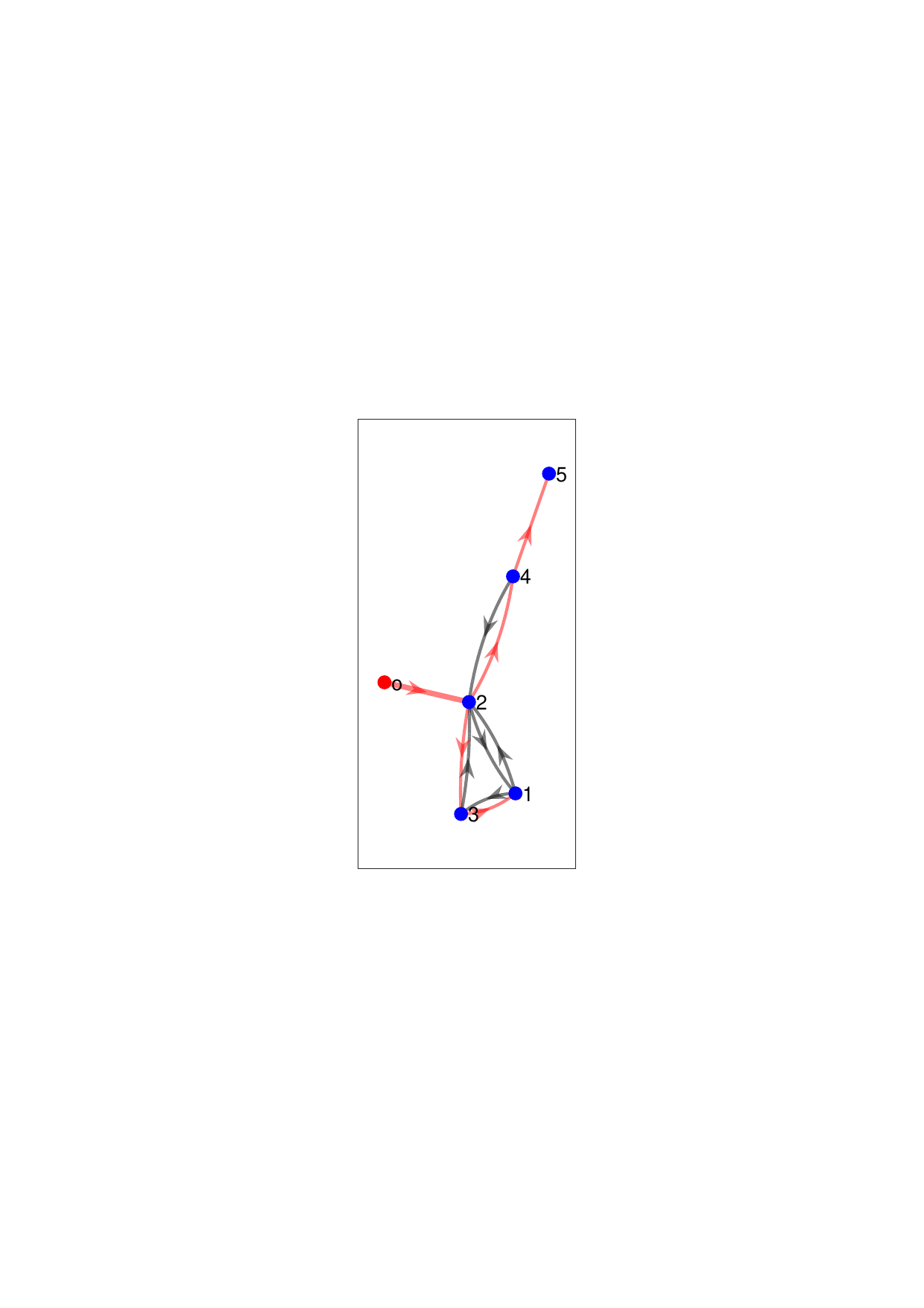}
        \text{(a)} 
        \label{fig:com_graph}
    \end{minipage}
    \hfill
    \begin{minipage}[t]{0.45\textwidth}
        \centering
        \includegraphics[width=\textwidth]{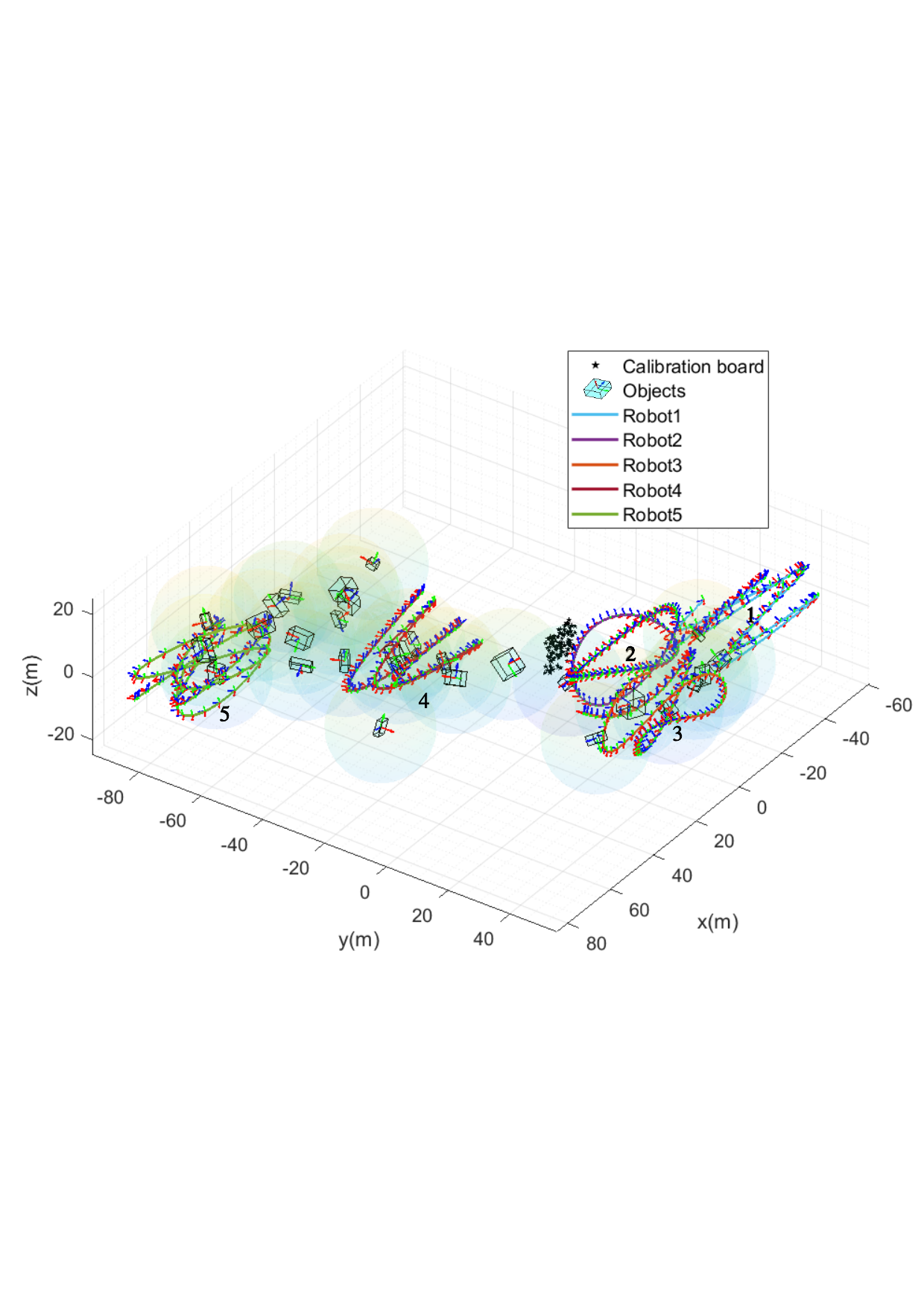}
        \text{(b)}
        \label{fig:traj}
    \end{minipage}
    \begin{minipage}[t]{0.39\textwidth}
        \centering
        \includegraphics[width=\textwidth]{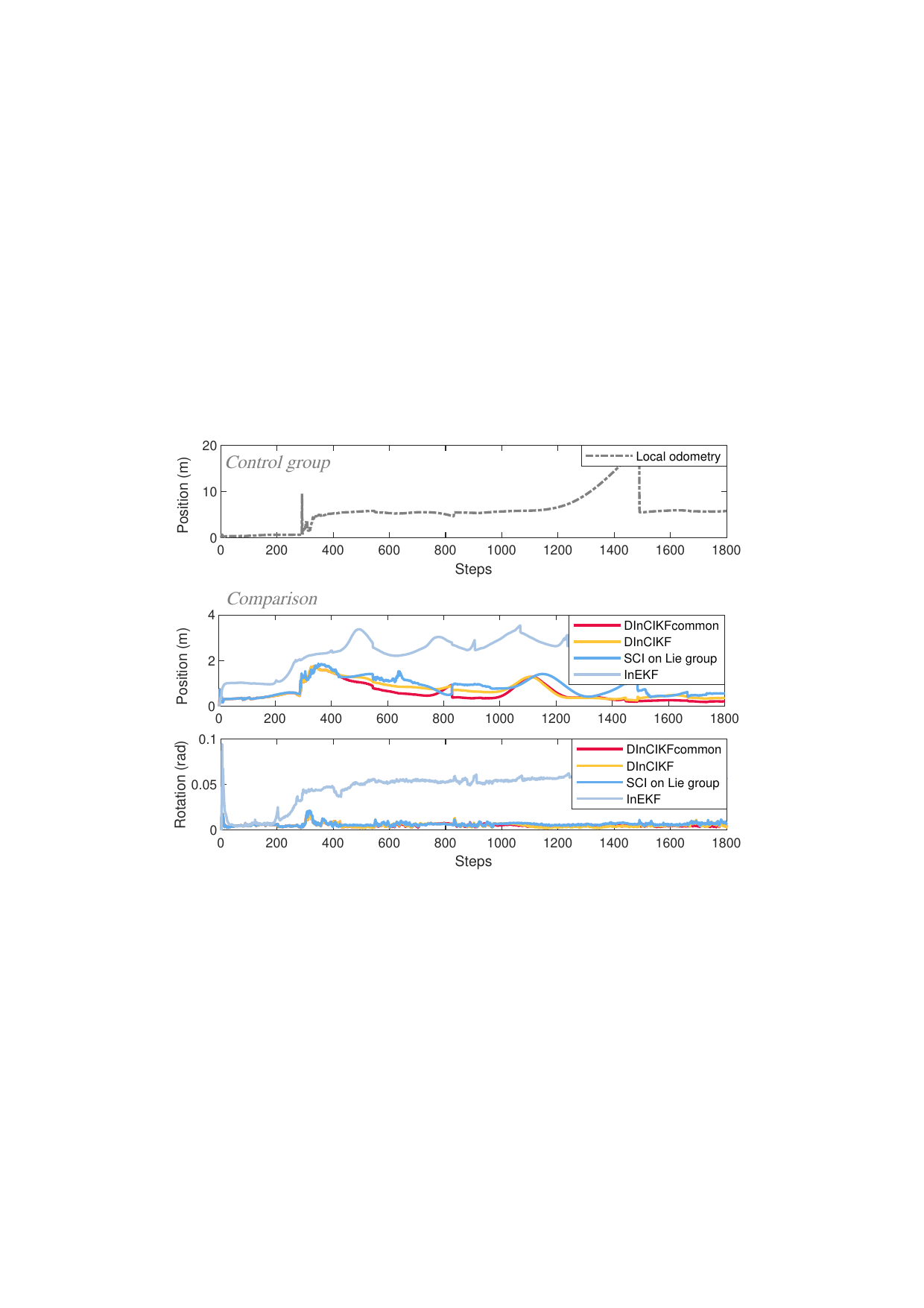}
        \text{(c)}
        \label{fig:rmse_curve}
    \end{minipage}
    \caption{\textbf{Simulation Experiment 1:} (a) Communication graph showing robot 2 with access to absolute information (points on a calibration board). The red line indicates a spanning tree rooted at the absolute source node $o$. (b) Simulation scenario with five robot trajectories and 38 object-level features. The sphere represents the 15-meter visible range of objects. Robots 1 and 5 lack object-level observations for extended periods, simulating a degenerate environment. (c) RMSE curves for average robot position and rotation estimation. The control group uses local odometry without communication, leading to inaccuracies and divergence from a prolonged lack of observation. Among the methods compared, ours achieves the highest localization accuracy.}
    \label{fig:sim1}
\end{figure*}
\begin{figure*}[ht] 
\vspace{-3mm} 
    \centering
    \begin{minipage}[t]{0.09\textwidth}
        \centering
        \includegraphics[width=\textwidth]{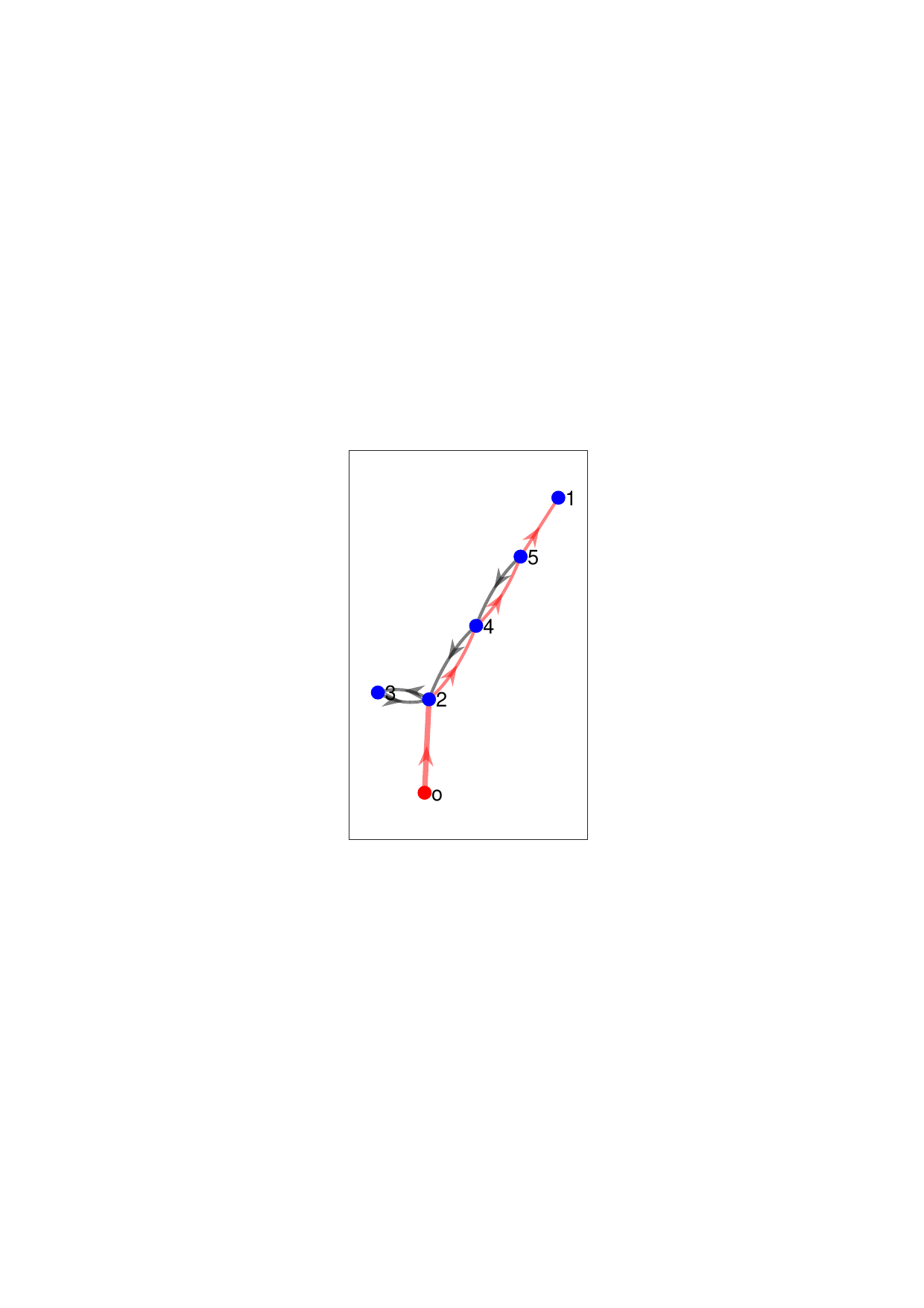}
        \text{(a)} 
        \label{fig:comgraph}
    \end{minipage}
    \hfill
    \begin{minipage}[t]{0.415\textwidth}
        \centering
        \includegraphics[width=\textwidth]{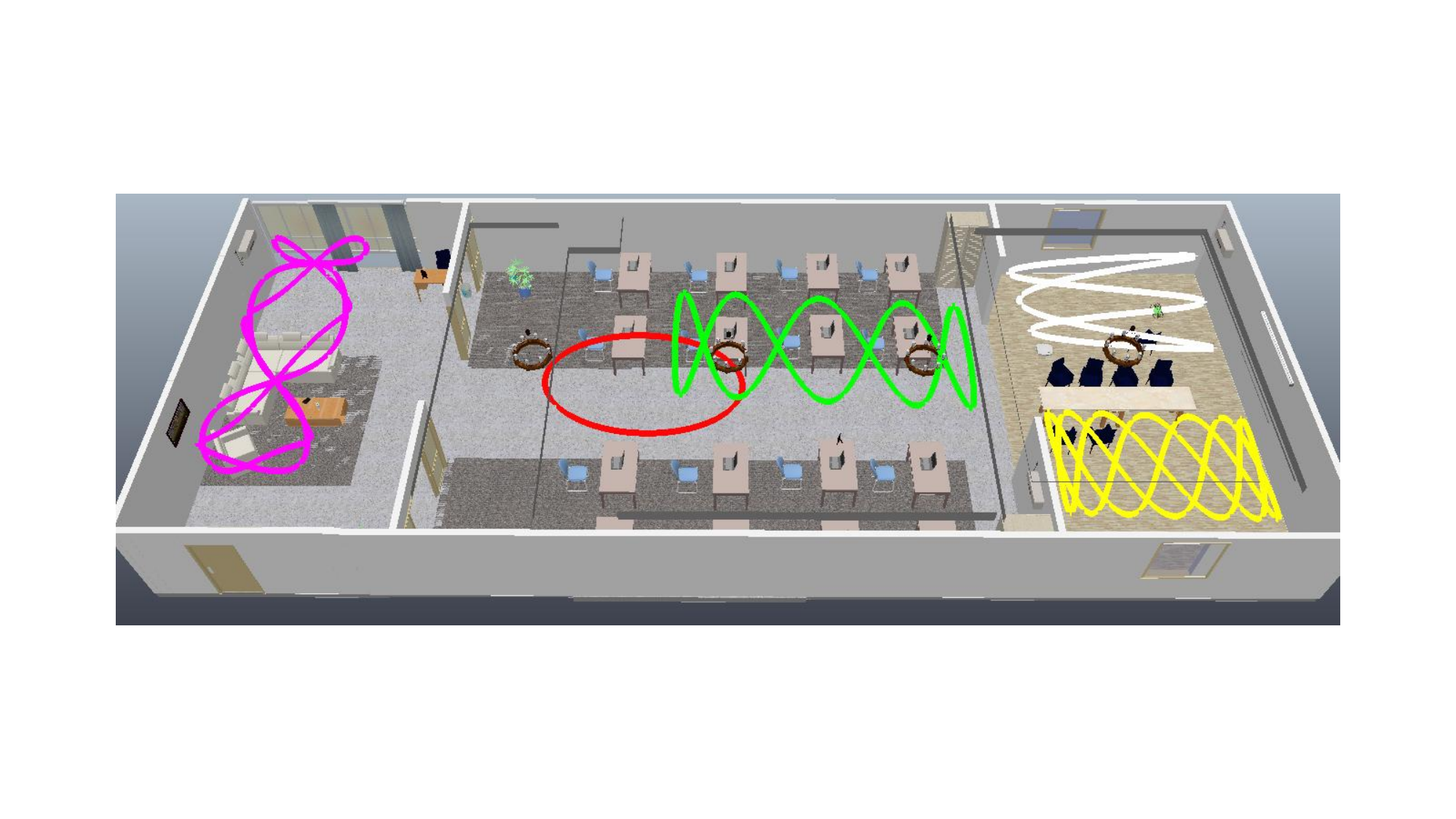}
        \text{(b)}
        \label{fig:sen2}
    \end{minipage}
    \hfill
    \begin{minipage}[t]{0.42\textwidth}
        \centering
        \includegraphics[width=\textwidth]{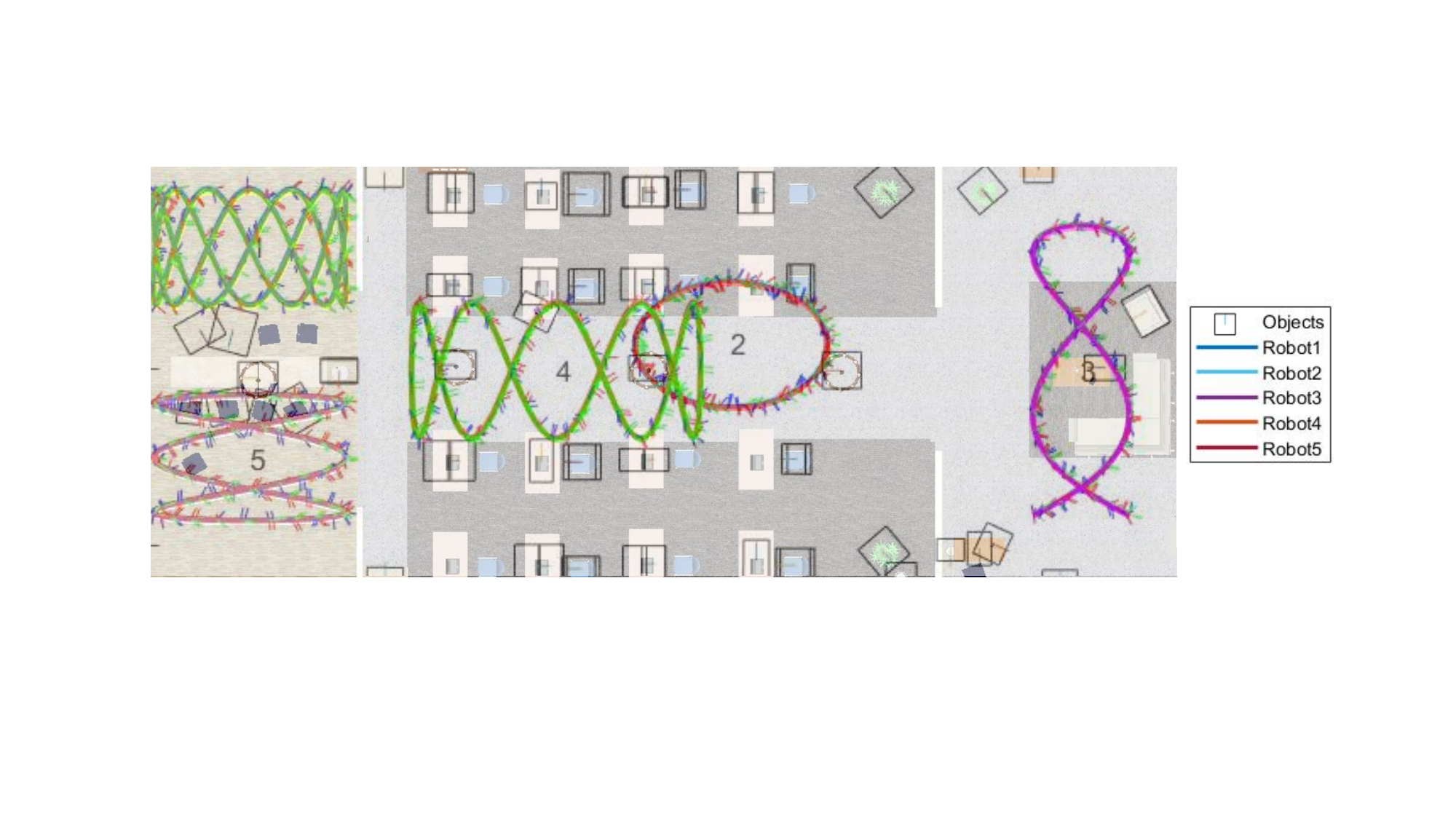}
        \text{(c)}
        \label{fig:path}
    \end{minipage}
    \caption{\textbf{Simulation Experiment 2: }(a) Communication graph of scenario 2. (b) 3D Simulation scene in CoppeliaSim. Five robots are working collaboratively in the office. (c) Illustration of estimated robot trajectories and object pose estimation results.}
    \label{fig:sim2}
    \vspace{-5mm}  
\end{figure*}

Next, CI on the Lie group is applied to fuse two elements: the neighbors' estimates of part of robot $i$'s state and robot $i$'s current estimate. Given the fused partial observations from neighbors, assuming one common object $f_s$ is fused for now, the method extends easily to multiple objects. The residual in CI step and the virtual noise covariance are defined as:
\begin{align}~\label{eq: res_neigh}
\tilde{r}_{i}(k) &= \begin{bmatrix}
    \log (\breve{T}_{i}(k)\tilde{T}_{i}(k)^{-1}  )\\
    \log (\breve{T}_{f_s}^{(i)}(k)\tilde{T}_{f_s}(k)^{-1}  )
\end{bmatrix},~\notag \\
\tilde{R}_i(k) &= \operatorname{diag}(\tilde{P}_i(k),\tilde{P}_{f_s}(k)). 
\end{align}
Then the fusing strategy is 
\begin{align}~\label{eq:CIup}
&\hat{P}_i(k)^{-1}=\gamma_{i,k}\breve{P}_i(k)^{-1}+(1-\gamma_{i,k})J_{\alpha,k}^\top\tilde{R}_i(k)^{-1} J_{\alpha,k},~\notag\\
&\hat{\zeta}_i(k)=(1-\gamma_{j,k})\hat{P}_i(k)J_{\alpha,k}^\top \tilde{r}_i(k)  , 
\end{align}
where \begin{small}$J_{\alpha,k} =\left[\begin{array}{c|ccc}
    I_6 &\multicolumn{3}{c}{0_{6\times[(d-6)+6K]}}  \\
    \hline
    0_6 &0~\cdots~0& I_6&0~\cdots~0 
\end{array} \right]\in \mathbb{R}^{12\times(d+6K)}$\end{small} and $\gamma_{j,k}>0$ is determined by $\min \operatorname{tr}(\hat{P}_i(k) )$. And $\breve{\zeta}_i(k)$ is omitted for zero means.
The last step is to map back to the Lie group state, the same as~\eqref{eq:mapback}.

The whole procedure of the proposed algorithm~DInCIKF for object-based pose SLAM is summarized in Algorithm 1.
\section{Theoretical Analysis}
We apply the result of DInCIKF without estimating object poses in~\cite{li2024covarianceintersectionbasedinvariantkalman} to analyze the covariance boundedness of $\hat{P}_i(k)$.
We can show that the DInCIKF applied in estimating robots' and objects' states is stable regarding the expected squared estimation error over time. 
\begin{theorem}[Stability of DInCIKF]
Under Assumption\ref{a:spanning_tree}, DInCIKF is stable in the sense that $\mathbf{E}[\hat{\xi}(k) \hat{\xi}(k)^{\top}]$ is bounded across time $k$ for all robots.
\end{theorem}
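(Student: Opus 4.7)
The plan is to reduce the augmented-state stability question to the robots-only stability result cited from~\cite{li2024covarianceintersectionbasedinvariantkalman}, and then bootstrap it to the full state $Y_i=\{X_i,T_{f_1}^{(i)},\ldots,T_{f_K}^{(i)}\}$. First, I would show that it suffices to bound $\hat{P}_i(k)$ since, because the invariant-error update~\eqref{eq:mapback} is unbiased at first order in $\hat{\zeta}_i$, one has $\mathbf{E}[\hat{\xi}(k)\hat{\xi}(k)^{\top}]\preceq C\,\hat{P}_{ii}(k)$ for some constant $C$ coming from the Lie-group–vector space conversion over a bounded region. Bounding $\hat{P}_i(k)$ therefore implies the stated bound.

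Next I would invoke the cited companion result to bound the robot block $\hat{P}_{ii}(k)$. Assumption~\ref{a:spanning_tree} guarantees that the absolute-information node $o$ reaches every robot via the directed spanning tree, and after finitely many CI fusion hops each robot inherits a bounded contribution from the absolute measurement. Combined with the bounded process noise $S_i(k)$ and bounded inter-robot measurement noise $R_{ij}$, the companion paper's argument on the inverse-covariance recursion produces a uniform-in-$k$ upper bound on $\hat{P}_{ii}(k)$ as long as the CI weights $\beta_{j,k},\gamma_{i,k}$ lie in a compact subinterval of $(0,1)$, which is enforced by the $\operatorname{tr}$-minimization selection rule.

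I would then extend the bound to the object blocks $\hat{P}_{f_s f_s}^{(i)}(k)$. Because $A_i=\operatorname{diag}(F_i,I_{6K})$ and $Q_i(k)=\operatorname{diag}(S_i(k),0)$, no process noise enters the object block during preintegration, so $\bar{P}_{f_s f_s}^{(i)}(k)=\hat{P}_{f_s f_s}^{(i)}(k-1)$. The initialization~\eqref{eq:initial} supplies a finite starting covariance, the KF update~\eqref{eq:updateobjp} is a Loewner contraction, and each subsequent CI fusion~\eqref{eq:CIup} with a neighbor's estimate of the same feature adds a positive definite information term. The Jacobian $J_{if_s}(k)=-\operatorname{Ad}_{\bar{T}_i(k)}$ is uniformly bounded on the event that $\bar T_i$ stays in a bounded set, which is exactly what stage two guarantees; hence the object blocks stay bounded. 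The off-diagonal blocks are then controlled through the positive semidefiniteness relation $\hat{P}_{ij}\hat{P}_{jj}^{-1}\hat{P}_{ji}\preceq\hat{P}_{ii}$, so boundedness of the diagonal blocks lifts to boundedness of the whole $\hat{P}_i(k)$.

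The hard part will be the interface between stages two and three: the companion paper's argument is written for a pure robot-network setting, and one has to verify that inserting object-feature measurements into $H_i(k)$ and object-feature entries into the CI fusion~\eqref{eq:fuseCIupda}--\eqref{eq:CIup} does not invalidate its key contraction inequality. The cleanest way to handle this is by induction on $k$: assume all $\hat{P}_i(\cdot)$ are bounded up to $k-1$, observe that appending object-related rows to $H_i(k)$ only provides additional information (thus tightens $\breve{P}_i(k)$ in the Loewner order), and that the object-augmented CI step is structurally identical to the robot-only CI step up to a block-diagonal reweighting. The induction closes once the weights $\gamma_{i,k}$ are shown to remain bounded away from $0$ and $1$ using the uniform positive-definiteness of $\tilde{R}_i(k)$ inherited from stage two.
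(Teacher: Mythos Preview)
Your proposal takes a genuinely different route from the paper. The paper does \emph{not} decompose $\hat{P}_i(k)$ into robot and object blocks and bootstrap from the companion result. Instead it builds a single auxiliary upper bound system (AUBS) for the \emph{full} augmented state: a recursion $\bar{\Pi}_i(k)=A\hat{\Pi}_i(k-1)A^\top+Q_i$, $\hat{\Pi}_i(k)^{-1}=\alpha_i\bar{\Pi}_i(k)^{-1}+\check{H}_i(k)^\top\check{R}_i^{-1}\check{H}_i(k)$, where $\check{H}_i(k)$ is defined recursively along the spanning tree by pushing the parent's observation matrix through $A^{-1}J_1^\top J_1$. It then proves two lemmas: observability of $(A,\check{H}_i)$, by showing that full column rank of $O(F,J_{j_1,o})$ at a level-one node propagates to full rank of $O(A,\check{H}_i)$ for every robot once the object Jacobian row $[J_i~J_{if}]$ is appended; and the domination $\hat{\Pi}_i(k)\succeq\hat{P}_i(k)$, obtained by dropping all but the tree-parent term in the CI sum and invoking a matrix inequality from the companion paper. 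Boundedness of $\hat{\Pi}_i$ then follows from observability, and the theorem from domination.

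Your block-wise argument is plausible in outline, and the monotonicity observations (extra measurement rows tighten $\breve{P}_i$ in Loewner order; zero process noise on object blocks keeps them non-increasing) are correct. The delicate point you flag is real: the CI weights $\gamma_{i,k}$ in the augmented filter are chosen to minimise $\operatorname{tr}(\hat{P}_i(k))$ over the full state, not over the robot block, so the comparison with the companion paper's robot-only filter requires you to prove the companion bound uniformly over weight sequences in a compact subinterval of $(0,1)$, not just for the companion's optimal weights. The paper's AUBS approach sidesteps this entirely by never isolating the robot block; the price is that it must establish observability of the augmented pair $(A,\check{H}_i)$ directly, which is the content of its Lemma~\ref{lem:relation_fullrank}. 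Your approach buys modularity at the cost of that weight-uniformity verification; the paper's buys a cleaner one-shot argument at the cost of a bespoke observability lemma.
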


The logic of the proof is to establish the stability of DInCIKF by demonstrating the stability of an auxiliary upper bound system~(AUBS).
For clarity, in the following context, we analyze the case where each robot estimates one object $f$, that is, $Y_i =[X_i~T_f]$ and the communication procedure does not fuse the common object estimates as a worst case. 
This analysis can be readily extended to the general case.
We use $Q_i$ and $R_i$ to denote the limited upper bound of the noise covariances. 

\textit{Proof of Theorem 1.} After the KF and CI step, the \textit{a posteriori} estimation error covariance is given by
\begin{small}
\begin{align*}
 &\hat{P}_i(k)^{-1}\\
 &~=\gamma_{i,k} \breve{P}_i(k)^{-1}+\left(1-\gamma_{i, k}\right) J_1^{\top}\tilde{P}_i(k)^{-1} J_1 \\
&~= \gamma_{i, k}\left(\bar{P}_i(k)^{-1}+H_i(k)^{\top} R_i^{-1} H_i(k)\right) \\
&~~+\left(1-\gamma_{i, k}\right)  \\
&~J_1^{\top}\sum_{j \in \mathcal{N}_i} \beta_{j, k}\left(J_1\left(A \hat{P}_{j}(k-1) A^{\top}+Q_j\right) J_1^{\top}+R_{i j}\right)^{-1} J_1,
\end{align*}
where $J_1\triangleq[G~0_{6\times6}]$ and robots share the same $A\triangleq \operatorname{diag}(F~0_{6 \times 6})$ for all estimating one object.
\end{small}

Under Assumption\ref{a2}, we can construct a directed spanning tree \(\mathcal{T} = (\mathcal{V}, \mathcal{E}_T)\) in \(\mathcal{G}\) rooted at node \(o\). The set \(\mathcal{V}\) is partitioned into \(\mathcal{V} = \mathcal{V}_1 \cup \mathcal{V}_2 \cup \cdots\), where \(\mathcal{V}_1\) consists of the root’s child nodes, \(\mathcal{V}_2\) includes the children of \(\mathcal{V}_1\), and so on. Denote \(\mathcal{N}_{io}\) as node \(i\)’s neighbors and its absolute information source, if accessible. Then the AUBS is constructed as follows:
\begin{equation}~\label{AUBS}
\begin{aligned}
& \bar{\Pi}_i(k)=A\hat{\Pi}_i(k-1)A^\top +Q_i, \\
& \hat{\Pi}_i(k)^{-1}=\alpha_{i}\bar{\Pi}_i(k)^{-1}+\check{H}_i(k)^{T}\check{R}_i^{-1} \check{H}_i(k),
\end{aligned}
\end{equation}
initialized with 
$\bar{\Pi}_{i}(0) \geq \bar{P}_{i}(0)$. In \eqref{AUBS}, if $i \in \mathcal{V}_1$, $\check{H}_{i}(k)\triangleq \begin{bmatrix}
{H}_{io}(k)\\
H_i(k)    
\end{bmatrix}$ and $\check{R}_i\triangleq \operatorname{diag}(R_{io}~ R_{if})$. By recursive definition, provided a node $j$'s $\check{H}_j(k-1)$ and $\check{R}_j(k-1)$ has been defined, for any of its child node $i$, the observation matrix includes neighbor-related $H_{i,n}$ part and its local part:
\begin{align*}
\check{H}_i(k) &\triangleq \begin{bmatrix}
    H_{i,n}(k)\\
    H_{i}(k)
\end{bmatrix} = \begin{bmatrix}
    \check{H}_{j}(k-1) A^{-1} J_1^\top J_1\\
    H_i(k)
\end{bmatrix}. 
\end{align*}
By~\cite {kailath2000linear}, the boundness of $\hat{\Pi}_i$ is decided by the observability of the AUBS. We recall the definition of the observability matrix in the following.
\begin{definition}[Observability~\cite{antsaklis2007linear}]
$(A, H)$ is observable if and only if for any $k>0$, there exists $n_k\geq 1$, such that the observability matrix, $O(A,H)=[H(k)^{\top},(H(k+1)A)^{\top},(H(k+2)A^2)^{\top},...,(H(k+n_k)A^{n_k})^{\top})]^{\top}$ has full column rank. 
\end{definition}

The next lemma illustrates the link between the observability of the robot's state and the joint observability of both the robot and the object. 
\begin{lemma}
If robot $j_1\in \mathcal{V}_1$ and $O({F}, J_{j_1,n})$ has full column rank, then $O({A}, \check{{H}}_i)$ has full column rank for all robots.
\label{lem:relation_fullrank}
\end{lemma}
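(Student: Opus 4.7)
I would prove the lemma by induction on the depth of each robot in the directed spanning tree $\mathcal{T}$ rooted at $o$, exploiting two structural facts: $A=\operatorname{diag}(F,I_{6})$ is block-diagonal with an identity block for the static object, and the local object Jacobian $J_{if_s}=-\operatorname{Ad}_{\bar T_i}$ is invertible. Accordingly, observability of $(A,\check H_i)$ decouples into two subproblems: observability of $\delta_f^{(i)}$, which is automatic from the local rows $H_i$ via invertibility of $J_{if}$, and observability of $\xi_i$, which must be inherited through the tree from the hypothesis at $j_1\in\mathcal V_1$.

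\textbf{Base case} ($i=j_1\in\mathcal V_1$). Take $v=(v_r,v_o)$ in the kernel of $O(A,\check H_{j_1})$. Since $H_{j_1,o}=[J_{j_1,o}\ 0]$ has no object columns and $A^n$ reduces to $F^n$ on the robot block, the $H_{j_1,o}$ rows yield $J_{j_1,o}(k+n)F^n v_r=0$ for every $n$; the hypothesis that $O(F,J_{j_1,n})$ has full column rank then forces $v_r=0$. The remaining local rows $H_{j_1}$ collapse to $J_{j_1 f}(k)v_o=0$, and invertibility of the adjoint delivers $v_o=0$.

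\textbf{Inductive step.} Let $j\in\mathcal V_{m-1}$ be the parent of $i\in\mathcal V_m$ and let $\check H_j^{r}$ denote the restriction of $\check H_j$ to its first $d$ robot-state columns. Assume inductively that $(F,\check H_j^{r})$ is observable; this is compatible with the base case because $\check H_{j_1}^{r}$ contains $J_{j_1,n}$ as a row block and adding rows cannot decrease column rank. Using the identity $A^{-1}J_1^\top J_1=\operatorname{diag}(F^{-1},0)$, the recursion gives $H_{i,n}(k)=[\check H_j^{r}(k-1)F^{-1}\ \ 0]$. The row block of $O(A,\check H_i)$ inherited from $H_{i,n}$ at step $k+n$ therefore equals $\check H_j^{r}(k+n-1)F^{n-1}$ on $v_r$ and $0$ on $v_o$. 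Reading off these rows for $n\ge 1$ already reproduces the standard observability matrix of $(F,\check H_j^{r})$ starting at time $k$, which by induction has full column rank; hence $v_r=0$. The local $H_i$ rows then dispatch $v_o$ via $J_{if}$, completing the step and propagating observability to every robot in $\mathcal V=\bigcup_m \mathcal V_m$.

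\textbf{Main obstacle.} The delicate bookkeeping is matching the time shift $k\mapsto k-1$ introduced by the communication step with the $F^{-1}$ emerging from $A^{-1}$, and then recognizing that, as $n$ grows, these cancel to reproduce a genuine observability matrix of the parent subsystem; in particular, one should not try to use the $n=0$ block in isolation since it carries a spurious $F^{-1}$. The only other subtle point is the strengthening of the inductive hypothesis from observability of the single Jacobian $J_{j,n}$ to observability of the full stacked $\check H_j^{r}$, which is free but must be acknowledged to justify the recursion at each depth.
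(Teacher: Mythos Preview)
Your proposal is correct and follows essentially the same line as the paper's own proof: exploit the block structure $A=\operatorname{diag}(F,I_6)$ together with $A^{-1}J_1^\top J_1=\operatorname{diag}(F^{-1},0)$ so that the neighbor rows of $\check H_i$ carry only robot-state information inherited from the parent, reduce observability of $\xi_i$ to the hypothesis at $j_1\in\mathcal V_1$, and kill the object component via the invertible adjoint $J_{if}$. The paper does this by fully unfolding the recursion back to $j_1$ and exhibiting $O(F,J_{j_1,o})$ as a sub-block of (a row-reduced) $O(\check A,\check H_i)$; you package the same computation as an induction on tree depth with the strengthened hypothesis that $(F,\check H_j^{r})$ is observable, which is arguably tidier but not a different idea.
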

\begin{proof}
See Appendix.\ref{App:pf_fullrank}.    
\end{proof}

By proving the observability of AUBS, we can conclude that its covariance converges and is bounded. We then introduce the following lemma to link the DInCIKF and the AUBS. 
\begin{lemma}
$
\hat{\Pi}_i(k) \geq \hat{P}_i(k)$ and $    \bar{\Pi}_i(k) \geq \bar{P}_i(k), ~ \forall k \geq 0.
$
~\label{Lem:upper}
\end{lemma}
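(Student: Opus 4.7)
The plan is to proceed by induction on the time step $k$, with the base case supplied by the assumed initialization $\bar{\Pi}_i(0)\geq\bar{P}_i(0)$ for every robot $i$. For the inductive step, I assume $\hat{\Pi}_j(k-1)\geq\hat{P}_j(k-1)$ holds for every $j$ and establish both $\bar{\Pi}_i(k)\geq\bar{P}_i(k)$ and $\hat{\Pi}_i(k)\geq\hat{P}_i(k)$. The propagation step is routine: the map $X\mapsto AXA^{\top}+Q_i$ is monotone on the positive semidefinite cone, so the hypothesis immediately yields
\begin{equation*}
\bar{\Pi}_i(k) = A\hat{\Pi}_i(k-1)A^{\top}+Q_i \ \geq\ A\hat{P}_i(k-1)A^{\top}+Q_i = \bar{P}_i(k),
\end{equation*}
and hence $\bar{\Pi}_i(k)^{-1}\leq\bar{P}_i(k)^{-1}$.

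The real work is showing $\hat{P}_i(k)^{-1}\geq\hat{\Pi}_i(k)^{-1}$. From the DInCIKF update,
\begin{equation*}
\hat{P}_i(k)^{-1}=\gamma_{i,k}\bigl(\bar{P}_i(k)^{-1}+H_i(k)^{\top}R_i^{-1}H_i(k)\bigr)+(1-\gamma_{i,k})J_1^{\top}\tilde{P}_i(k)^{-1}J_1,
\end{equation*}
with $\tilde{P}_i(k)^{-1}=\sum_{j\in\mathcal{N}_i}\beta_{j,k}\bigl(J_1\bar{P}_j(k)J_1^{\top}+R_{ij}\bigr)^{-1}$. My plan is to apply the induction hypothesis blockwise to get $\bigl(J_1\bar{P}_j(k)J_1^{\top}+R_{ij}\bigr)^{-1}\geq\bigl(J_1\bar{\Pi}_j(k)J_1^{\top}+R_{ij}\bigr)^{-1}$, so the neighbor contribution on the DInCIKF side dominates the analogous AUBS quantity. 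The constant $\alpha_i$ in the AUBS is then chosen as a uniform lower bound for the trace-minimizing weights $\gamma_{i,k}$ (standard under the uniformly bounded noise covariances assumed here), ensuring $\alpha_i\bar{\Pi}_i(k)^{-1}\leq\gamma_{i,k}\bar{P}_i(k)^{-1}$. What remains is to verify that the accumulated observation term $\check{H}_i(k)^{\top}\check{R}_i^{-1}\check{H}_i(k)$ is dominated by the combined local-plus-neighbor information on the DInCIKF side.

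The main obstacle is exactly this matching of the recursive stack $\check{H}_i(k)=[\check{H}_j(k-1)A^{-1}J_1^{\top}J_1;\,H_i(k)]$, where $j$ is the parent of $i$ in the spanning tree $\mathcal{T}$, against the CI fusion that carries neighbor information only indirectly through $\tilde{P}_i(k)^{-1}$. To handle this, I would unfold both recursions along the directed path from the root $o$ down to $i$: each step of the CI fusion is rewritten via the matrix inversion lemma into a prior-plus-observation form, and the AUBS recursion for $\bar{\Pi}_j(k)^{-1}$ is substituted in so that the parent's block $\check{H}_j(k-1)^{\top}\check{R}_j^{-1}\check{H}_j(k-1)$ reappears on the DInCIKF side, up to factors $\beta_{j,k}(1-\gamma_{i,k})$ that are controlled through the chosen $\alpha_i$. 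The object-state block is comparatively easy, since the worst-case analysis explicitly excludes cross-robot fusion of object estimates, so the local $H_i$ acts identically on both sides and the inequality transfers directly from $\bar{\Pi}_i\geq\bar{P}_i$.
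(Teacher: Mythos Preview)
Your high-level plan coincides with the paper's: induction on $k$, trivial propagation by monotonicity of $X\mapsto AXA^\top+Q_i$, then lower-bounding $\hat P_i(k)^{-1}$ by keeping only the parent $j$ in the spanning tree $\mathcal T$ and matching the CI neighbor contribution against the recursive AUBS observation stack $\check H_i(k)^\top\check R_i^{-1}\check H_i(k)$. The place your proposal is incomplete is exactly the step you flag as ``the main obstacle.''

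You claim that after rewriting the CI term via the matrix inversion lemma and substituting the AUBS recursion, the parent block $\check H_j(k-1)^\top\check R_j^{-1}\check H_j(k-1)$ ``reappears on the DInCIKF side, up to factors $\beta_{j,k}(1-\gamma_{i,k})$.'' That is not what happens. The quantity to lower-bound is
\[
J_1^\top\Bigl(J_1\bigl(A\hat P_j(k-1)A^\top+Q_j\bigr)J_1^\top+R_{ij}\Bigr)^{-1}J_1,
\]
and the matrix inversion lemma gives an \emph{identity}, not an inequality: the additive noise blocks $Q_j$ and $R_{ij}$ live inside the inverse and cannot be stripped out by algebraic unfolding alone. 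What is needed is a genuine matrix inequality that absorbs these noise terms into a scalar contraction factor. The paper does not prove this from scratch; it invokes Lemma~4 of \cite{li2024covarianceintersectionbasedinvariantkalman}, which supplies a constant $\eta_j>0$ (determined by the noise bounds) such that
\[
J_1^\top\bigl(J_1\bar P_j(k)J_1^\top+R_{ij}\bigr)^{-1}J_1\;\geq\;\eta_j\,J_1^\top J_1\,A^{-\top}\,\check H_j(k-1)^\top\check R_j^{-1}\check H_j(k-1)\,A^{-1}\,J_1^\top J_1,
\]
and this $\eta_j$ is then packed into the definition of $\check R_i^{-1}=\operatorname{diag}\bigl((1-\gamma_{i,k})\eta_j\check R_j^{-1},\,\gamma_{i,k}R_i^{-1}\bigr)$ rather than being a pure $\beta_{j,k}(1-\gamma_{i,k})$ weight. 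Without this inequality (or an equivalent) your induction does not close: you are left with an inverse of a sum that you have not shown dominates the required quadratic form in $\check H_j$. Supplying or citing that bound is the one missing ingredient; the rest of your outline is correct and matches the paper.
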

\begin{proof}
See Appendix.\ref{App:pf_upper}.
\end{proof}
By Lemma~\ref{Lem:upper} and the convergence of the AUBS, we can finally conclude the proof.  $\blacksquare$
\section{Experiment}
\subsection{Simulation}
This section introduces the simulation experiments. 
We compare our proposed DInCIKF, both with common features among neighbors (DInCIKFc) and without (DInCIKF), against InEKF~\cite{li2022closed} and SCI on Lie groups (SCI Lie)~\cite{li2021joint}. We also include local odometry as a control group. Following~\cite{chang2021resilient}, we evaluate the estimation results at each time step $k$ using the average root-mean-squared-error (RMSE) of rotation and position across $N$ robots as follows:
\begin{small}
\begin{align*}
\operatorname{RMSE}_{rot}(k)&=\sqrt{\frac{ \sum_{i=1}^N\| \log(\hat R_i(k)^T R_i(k))   \|^2 }{N}},\\
\operatorname{RMSE}_{pos}(k)&=\sqrt{\frac{ \sum_{i=1}^N\| \hat P_{ii}(k)- P_{ii}(k)  \|^2 }{N}}.
\end{align*}
\end{small}The average RMSE of rotation and poison among all time steps are denoted as $\overline{\operatorname{RMSE}}_{rot}$ and $\overline{\operatorname{RMSE}}_{pos}$. 
\begin{table*}[tp]
    \centering
    \caption{RMSE of Simulation 1 and Simulation 2}\vspace{-3mm}  
    \begin{tabular}{c|ccccc|ccccc}
        \hline
        & \multicolumn{5}{c|}{Simulation 1} & \multicolumn{5}{c}{Simulation 2} \\
        \cline{2-11}
        ~ & DInCIKFc & DInCIKF & SCI Lie~\cite{li2021joint} & InEKF~\cite{li2022closed} & & DInCIKFc & DInCIKF & SCI Lie~\cite{li2021joint} & InEKF~\cite{li2022closed} \\
        \hline
        Robot 1 & .509/.004 & .629/.004 & .766/.005 & 1.702/.020 & & .043/.003 & .043/.005 & .066/.005 & .272/.017 \\
        Robot 2 & .038/.001 & .039/.001 & .033/.002 & .922/.026 & & .111/.001 & .141/.003 & .233/.004 & .344/.006 \\
        Robot 3 & .081/.002 & .084/.002 & .175/.002 & .812/.024 & & .069/.008 & .147/.003 & .075/.002 & .365/.005 \\
        Robot 4 & .073/.001 & .073/.001 & .099/.002 & .991/.014 & & .095/.012 & .068/.007 & .099/.014 & .327/.012 \\
        Robot 5 & .167/.002 & .167/.002 & .210/.002 & .991/.013 & & .090/.008 & .066/.005 & .104/.082 & .345/.016 \\
        Average Robot& \textbf{.173}/\textbf{.002} & .198/.002 & .240/.003 & 1.038/.021 & & \textbf{.082}/.006 & .093/\textbf{.005} & .115/.008 & .331/.011 \\
        Average Object & \textbf{.427}/\textbf{.005} & .600/.008 & 1.101/.017 & 1.210/.132 & & \textbf{.087}/\textbf{.024} & .178/.033 & .110/.047 & .658/.087 \\
        \hline
    \end{tabular}
    \begin{tablenotes}
        \item[*] $\overline{\operatorname{RMSE}}_{pos}$ (m)/~$\overline{\operatorname{RMSE}}_{rot}$ (rad).
    \end{tablenotes}
    \label{tab:sim_result_combined}
    \vspace{-5mm}  
\end{table*}

We set up two scenarios. In the first scenario (Fig.\ref{fig:sim1}), robots 1 and 5 intermittently enter the perception blind spots and need to rely on communication with other robots for localization. Covariance matrices for process and observation noise are set to $0.01 I_6$.
In the second scenario (Fig.\ref{fig:sim2}), we use CoppeliaSim to simulate a $20 \times 10~m^2$ office with key objects like chairs, desks, and laptops. This scenario tests the broadcast of absolute pose information across a multi-robot communication network to align the local frames among robots, crucial for tasks like collaborative mapping. Robot 3 is initially biased by 1 meter along the x-axis, with the same noise settings as in the first scenario.

The estimation results for Scenario 1 are shown in Fig.\ref{fig:sim1}~(c) and Tab.\ref{tab:sim_result_combined}. It can be observed that Robot 1 and Robot 5 exhibit higher inaccuracy compared to the others, as they experienced blind periods. Our proposed algorithm outperforms both SCI Lie and InEKF, with InEKF performing the worst, illustrating that inconsistency leads to inaccuracy. Notably, DInCIKFc achieves the best performance in estimating object poses. The results for Scenario 2 are presented in Tab.\ref{tab:sim_result_combined} and Fig.\ref{fig:sim2}~(c). Here, our algorithm demonstrates a clear advantage in automatically compensating for initial bias. While Robot 3 suffers from a larger estimation error due to this bias, our method effectively corrects its trajectory. Additionally, leveraging common objects greatly enhances position estimation accuracy. The performance of DInCIKFc depends on the selected value of $N_o$, which was set to 6 in our experiments, as well as the number of common objects present in the environment.
\subsection{Real-world Dataset}
We use the YCB-Video dataset~\cite{xiang2017posecnn}, dividing it into three sequences with 30\% overlap to simulate three robots. We introduce noisy relative measurements with \( R_{ij} = 0.01^2 I_6 \). Following~\cite{9667208}, we use the process model~\eqref{eq:process1} and assume the constant velocity due to the low camera motion speed. Angular velocity is set to zero with noise, and the expected linear velocity is the average of the previous six estimations. We set the process noise to be \( 0.01^2 I_6 \). The communication network is a chain: Robot 1 communicates with Robot 2, Robot 2 with Robot 3, but Robot 1 and Robot 3 do not communicate directly.

To obtain measurements $^{i}T_{f,m}$ for robots $i$ and object $f$ we use the 6-DOF object poses in the current camera frame provided by FoundationPose~\cite{wen2024foundationpose}. Sample object measurements are illustrated in Fig.\ref{fig:ycbv}~(a)(c). In the real-world dataset experiments, we compare our method against two single-robot object SLAM algorithms: Gaussian Mixture Midway-Merge~(GM-midway)~\cite{jung2022gaussian} and the Right invariant EKF(RI-EKF)~\cite{9667208}. Additionally, we evaluate performance against the multi-robot method SCI on the Lie group.
\begin{table}[htbp] 
\centering\vspace{-1mm} 
\caption{Results of YCB-V dataset.}\vspace{-3mm} 
\begin{threeparttable}
\begin{tabular}{l|c|ccc}
\hline
    Sequence                &   ~    &07         &   22       &27    \\
\hline 
\multirow{2}*{DInCIKFc}  & Robot  &\textbf{.013}/\textbf{.010}  &\textbf{.012}/\textbf{.009}&\textbf{.014}/\textbf{.035 }\\             
                       ~&Object &\textbf{.020}/\textbf{.014}  &.018/\textbf{.011}& 
 \textbf{.015}/\textbf{.018 } \\
    \hline
\multirow{2}*{RI-EKF~\cite{9667208}}  & Robot  &.036/.055  &.028/.068&.017/.052 \\
                        ~&Object&.035/.036  &.023/.057&.019/.019\\
    \hline
\multirow{2}*{GM-midway~\cite{jung2022gaussian}}&Robot  &.035/.039  &.028/.032&.015/.039     \\                      
                        ~&Object&.034/.032  &.024/.046&.017/.019 \\
    \hline
\multirow{2}*{SCI Lie~\cite{li2021joint}} &Robot   &.027/.021  &.015/.012&.016/.036\\
                       ~&Object &.028/.034  &\textbf{.015}/.014&.017/.020 \\
\hline
\end{tabular}
\begin{tablenotes}
    \item $\overline{\operatorname{RMSE}}_{pos}$ (m)/~$\overline{\operatorname{RMSE}}_{rot}$ (rad).
    \item The results of RI-EKF and GM-midway are from~\cite{jung2022gaussian}.
\end{tablenotes}
\end{threeparttable}
\label{tab:ycbvresult}
\vspace{-2mm}  
\end{table}
As shown in Tab.\ref{tab:ycbvresult}, the multi-robot methods outperform the estimates generated by individual robots in general. This improvement arises from inter-robot communication, allowing more efficient information usage. Despite introducing relative measurement noise between robots, the information fusion can trade spatial resolution for temporal efficiency. These results highlight the importance of multi-robot systems in environmental exploration. Besides, the FoundationPose used for observation acquisition achieves higher accuracy benefitting for a better accuracy. The comparison between the estimation results of DInCIKF and SCI on Lie groups clearly shows that our method provides higher accuracy.
\begin{figure}[htbp]
    \centering
    \begin{minipage}[t]{0.44\linewidth}
        \centering
        \includegraphics[width=\linewidth]{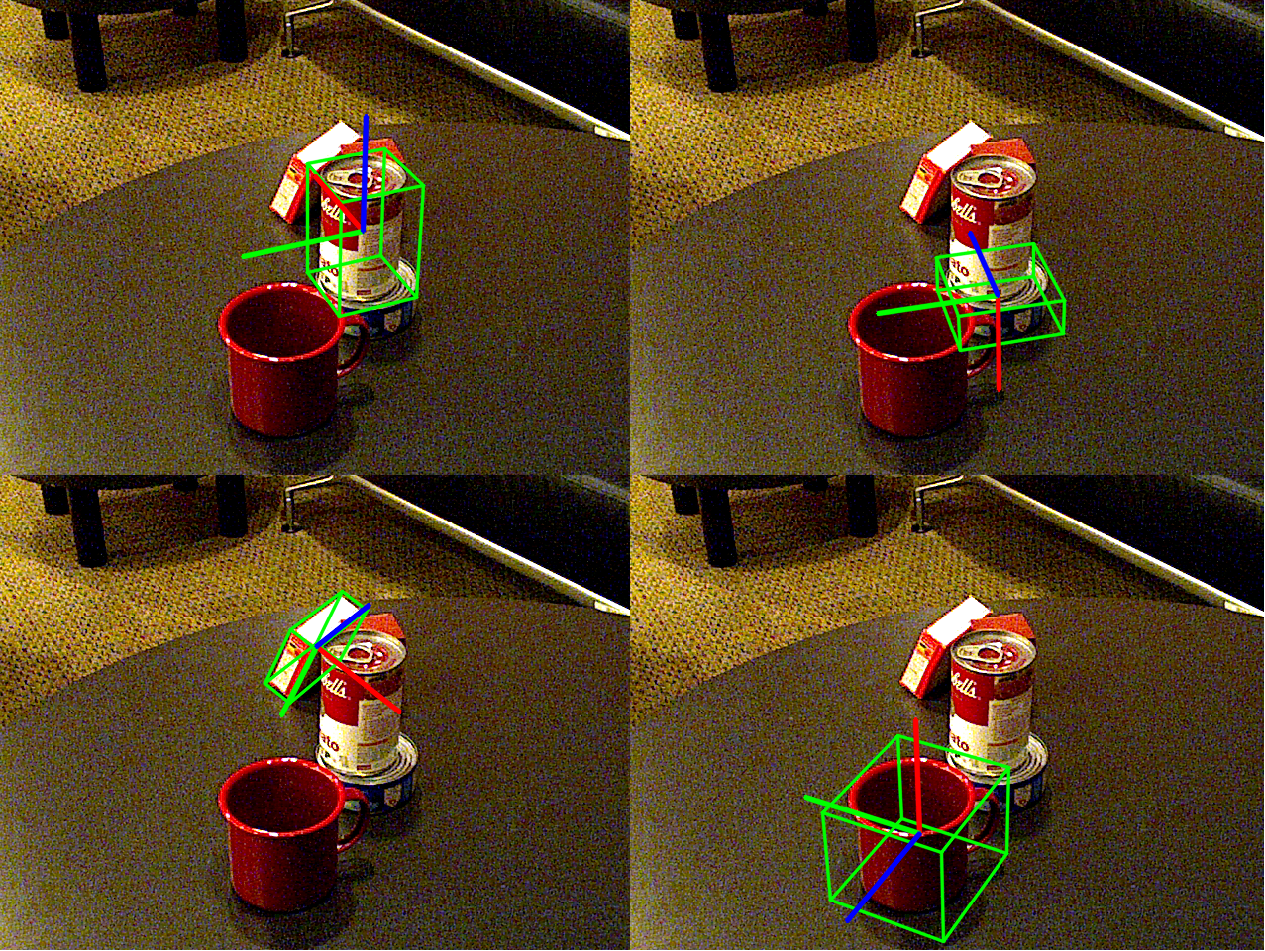}
        \text{(a)}
        \label{fig:seq22}
    \end{minipage}
    \hfill
    \begin{minipage}[t]{0.5\linewidth}
        \centering
        \includegraphics[width=\linewidth]{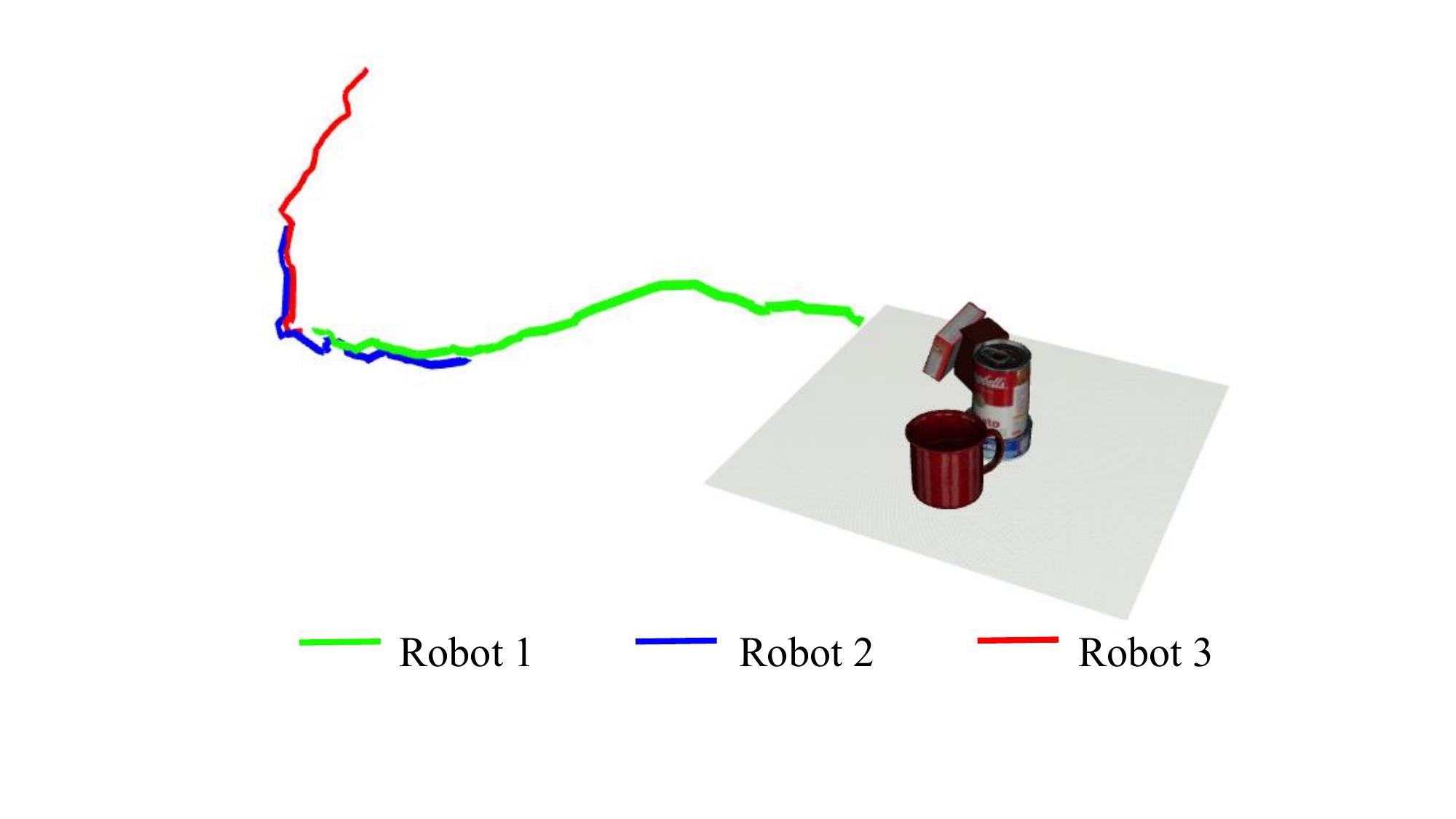}
        \text{(b)}
        \label{fig:seq22result}
    \end{minipage}
    \vspace{1em}
    \begin{minipage}[t]{0.44\linewidth}
        \centering
        \includegraphics[width=\linewidth]{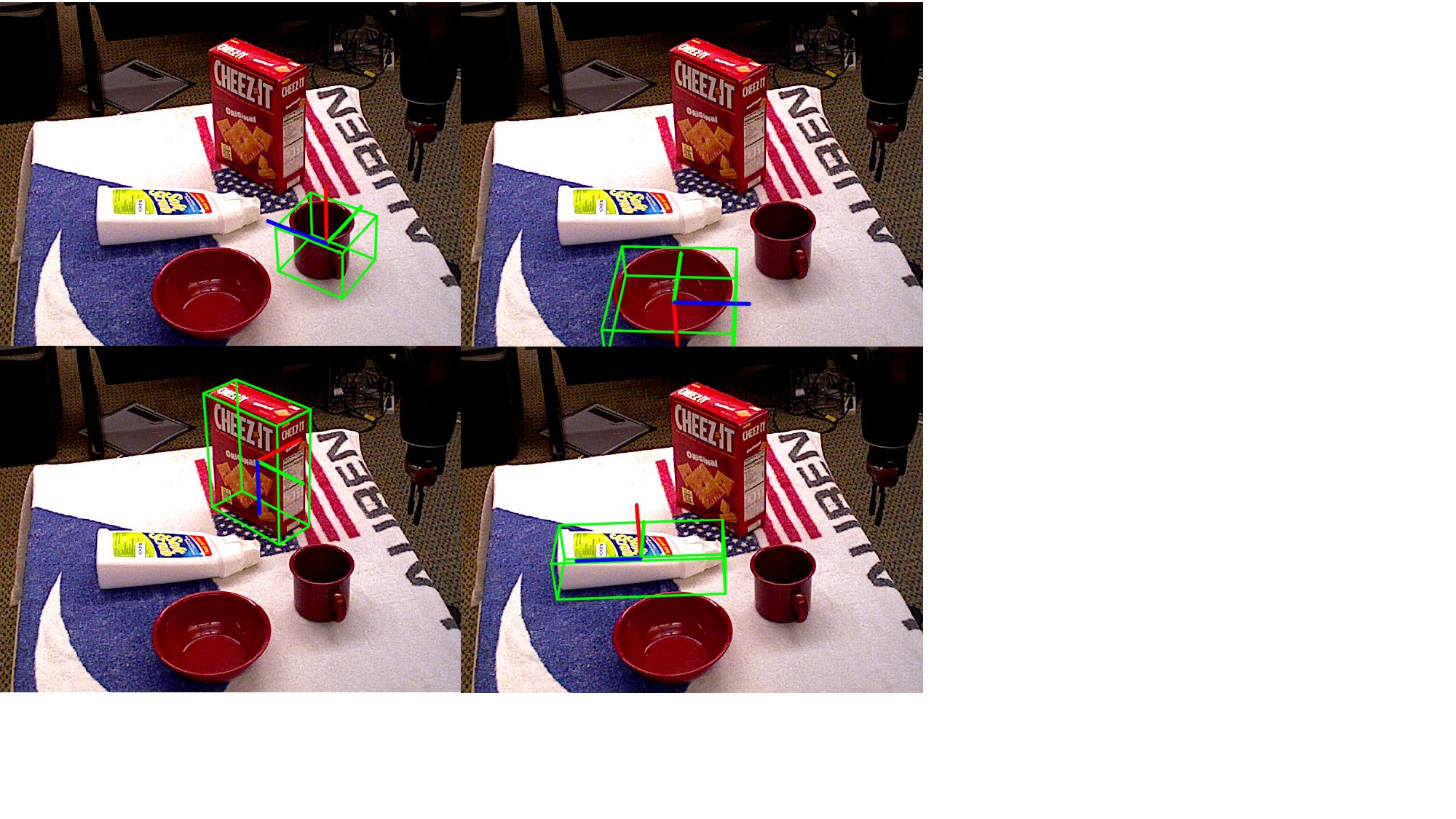}
        \text{(c)}
        \label{fig:seq7}
    \end{minipage}
    \hfill
    \begin{minipage}[t]{0.49\linewidth}
        \centering
        \includegraphics[width=\linewidth]{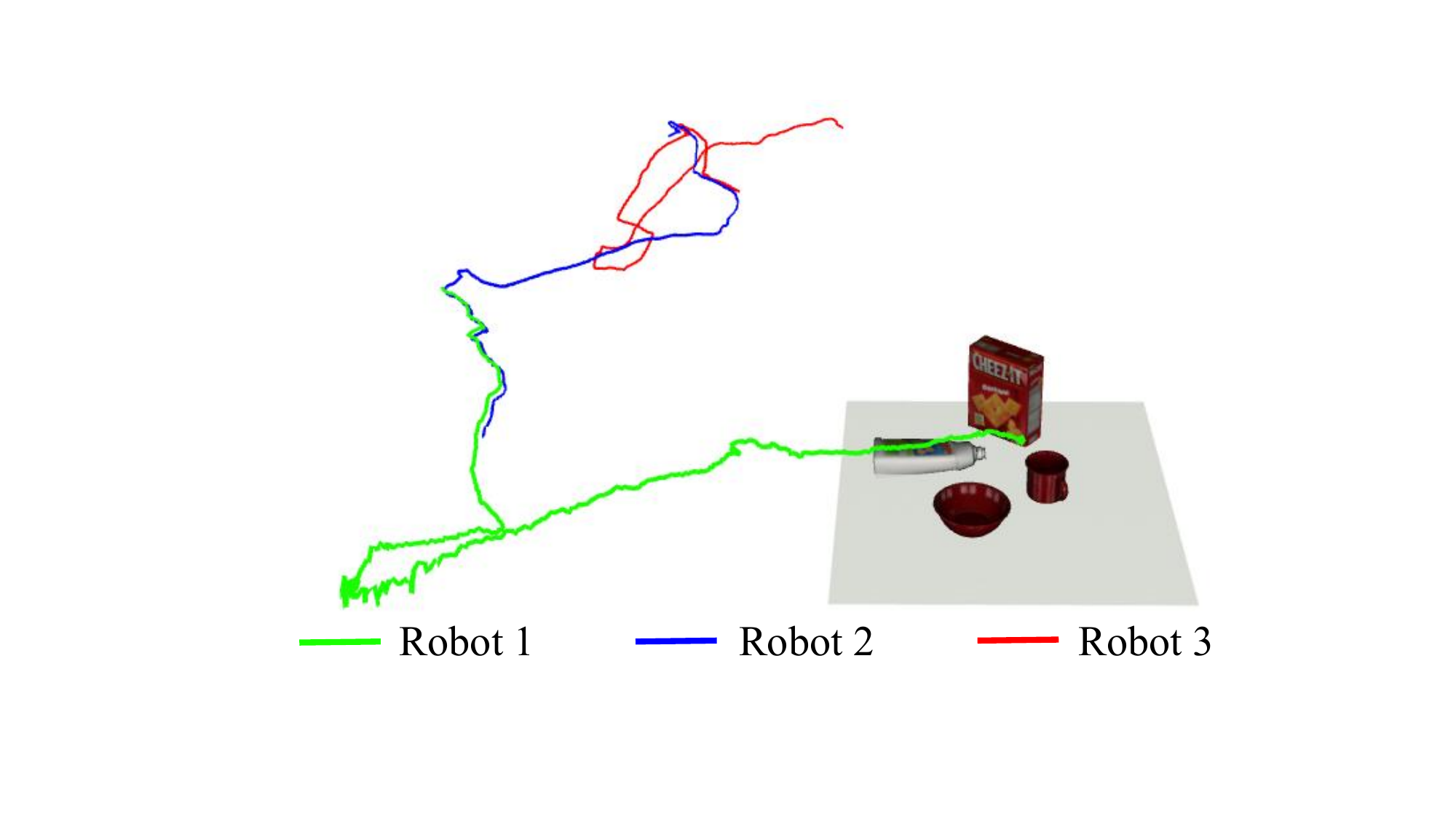}
        \text{(d)}
        \label{fig:seq7result}
    \end{minipage}
    \vspace{-4mm}  
    \caption{\textbf{YCB-V dataset Indication:} (a)(c) Sample images and FoundationPose outputs of sequences 22 and 7 respectively. (b)(d) Estimation results for sequences 22 and 7.}
    \label{fig:ycbv}
    \vspace{-3mm}  
\end{figure}
\section{Conclusion}
This paper presents DInCIKF, a fully distributed object-level pose SLAM algorithm designed to efficiently estimate both robot trajectories and object poses. By combining InEKF with CI, the approach mitigates overconfidence and conservatism in multi-robot systems. Modeling states on a Lie group and capturing uncertainties via Lie algebra improves estimation accuracy, linearity, and consistency. The use of object-level information further reduces communication demands, enhancing both bandwidth and frequency efficiency. Simulations and real-world experiments demonstrate the algorithm's robustness and effectiveness for real-time multi-robot applications. Future work will focus on refining map building, loop closure detection, and we are developing a much-needed object-level multi-robot SLAM dataset to address current data scarcity.

\newpage
\onecolumn
\section*{APPENDIX}
\subsection{Derivation of Jacobian matrix}
\label{app:Jacob}
Given that 
$$
y_{if}= \log( ^{i}T_{f,m}  ( \bar{T}_i^{-1}\bar{T}_{f}^{(i)})^{-1} )
$$

Then the Jacobian matrix of $y_{if}$ with respect to the logarithmic invariant error is:
\begin{align*}
I_6+y_{if}^\wedge & \approx \exp \left(y\right) &\\
& =T_i^{-1} \exp \left(n_{if}\right) T_f \hat{T}_f^{-1} \hat{T}_i \\
& =T_i^{-1} \exp \left(n_{if}\right) \exp \left(\delta_f\right) \hat{T}_i \\
& =T_i^{-1} \exp (\delta_i) \exp \left(-\delta_i\right) \exp \left(n_{i f}\right) \exp \left(\delta_f\right) \hat{T}_i \\
& =\hat{T}_i^{-1} \exp \left(-\delta_i+n_{i f}+\delta_f\right) \hat{T_i} \\
& \approx \hat{T}_i^{-1}\left(I-(\delta_i+n_{i f}+\delta_f)^\wedge\right) \hat{T_i} \\
& =I-\hat{T}_i^{-1}\left(n_{i f}-\delta_i+\delta_f\right)^\wedge \hat{T_i} \\
& =I-   \operatorname{Ad}_{\hat{T}_i^{-1}}         \left(n_{i f}-\delta_i+\delta_f\right)^\wedge.
\end{align*}
The linearization is then $
y_{if_s} = H_{if_s} \zeta_i + v',
$
where $v' = \operatorname{Ad}_{\hat{T}_i^{-1}}v $ and \begin{small}
$$H = \begin{bmatrix}
  (  \operatorname{Ad}_{\hat{T}_i^{-1}}G) & 0 & \cdots & 0 & -\operatorname{Ad}_{\hat{T}_i^{-1}}& 0 & \cdots & 0
\end{bmatrix}. $$\end{small}

\subsection{Proof of Lemma~\ref{lem:relation_fullrank}}
\label{App:pf_fullrank}
Given that 
$$\check{A} = \frac{1}{\sqrt{\alpha_i}}A=\begin{bmatrix}
    \check{F}&0\\
    0&\frac{1}{\sqrt{\alpha_i}} I
\end{bmatrix}
=\begin{bmatrix}
   \frac{1}{\sqrt{\alpha_i}}F&0\\
    0&\frac{1}{\sqrt{\alpha_i}} I\end{bmatrix}, \quad 
\check{{H}}_{j_1}(k)=\begin{bmatrix}
H_{i, n}(k)  \\
H_{i}(k)
\end{bmatrix}  = \begin{bmatrix}
\check{H}_j(k-1)A^{-1}J_1^\top J_1\\
J_i(k)\quad J_{if}(k)
\end{bmatrix}.
$$
For node $i \in \mathcal{V}_1$, 
\begin{align*}
\check{H}_i(k) &\triangleq \begin{bmatrix}
    H_{i,n}(k)\\
    H_{i}(k)
\end{bmatrix} = \begin{bmatrix}
    \check{H}_{j}(k-1) A^{-1} J_1^\top J_1\\
    H_i(k)
\end{bmatrix}=  \begin{bmatrix}
J_{j_1 o}(k-m+1) F^{-(m-1)} & 0\\
J_{j_1}(k-m+1) F^{-(m-1)} &0\\ 
J_{i}(k) & J_{if}(k)
\end{bmatrix}.
\end{align*}
Through row transformations and omit the positive scales, we obtain:
$
\operatorname{rank}(O(\check{A}, \check{{H}}_i)) = \operatorname{rank}(O'),
$
where $$O^{\prime}=\left[\begin{array}{cc}
O\left({F}, J_{j_1,o}\right) & 0 \\
J_i(k) & J_{if}(k)\\
J_i(k+1) A &J_{i f}(k+1) \\
J_i(k+2) A^2 & J_{i f}(k+2) \\
\vdots & \vdots
\end{array}\right]
$$
 The full rank of the first six or nine columns is achieved through $O(F, J_{j_1,o})$ has full rank. At the same time, the last six columns become full rank by nature of $H_{if}$.

\subsection{Proof of Lemma~\ref{Lem:upper}} \label{App:pf_upper}

By retaining only the parent nodes of robot \(i\) in the directed tree \(\mathcal{T}\), we obtain
\begin{align}
 &\hat{P}_i(k)^{-1}\notag\\
 &~=\gamma_{i,k} \breve{P}_i(k)^{-1}+\left(1-\gamma_{i, k}\right) J_1^{\top}\tilde{P}_i(k)^{-1} J_1 \notag\\
&~= \gamma_{i, k}\left(\bar{P}_i(k)^{-1}+H_i(k)^{\top} R_i^{-1} H_i(k)\right) \notag\\
&~~+\left(1-\gamma_{i, k}\right)J_1^{\top}\sum_{j \in \mathcal{N}_i} \beta_{j, k}\left(J_1\left(A \hat{P}_{j}(k-1) A^{\top}+Q_j\right) J_1^{\top}+R_{i j}\right)^{-1} J_1,\notag\\
&\geq \gamma_{i, k} \bar{P}_i(k)^{-1}+\left(1-\gamma_{i, k}\right) \Omega_j(k)+\gamma_{i,k} H_{i}(k)^{\top} R_{i}^{-1} H_{i}(k),~\label{eq:drop}
\end{align}
where $J_1\triangleq[G~0_{6\times6}]$ and robots share the same $A\triangleq \operatorname{diag}(F~0_{6 \times 6})$ for all estimating one object, and
$$\Omega_j(k)\triangleq \beta_{j, k}\left(J_1\left(A \hat{P}_{j}(k-1) A^{\top}+Q_j\right) J_1^{\top}+R_{i j}\right)^{-1} J_1.$$ Invoking Lemma 4 in~\cite{li2024covarianceintersectionbasedinvariantkalman}, if follows
\begin{equation}~\label{eq:Omega}
\Omega_j(k)\geq \eta_j J_1^{\top} J_1 A^{-\top}\left(\check{H}_{j}(k-1)^{\top} \check{R}_j^{-1} \check{H}_{j}(k-1)\right) A^{-1} J_1^{\top} J_1,
\end{equation}
where the calculation of $\eta_j$ see Appendix of~\cite{li2024covarianceintersectionbasedinvariantkalman}. 
Take \eqref{eq:Omega} into \eqref{eq:drop}, it follows that
\begin{align*}
 &\hat{P}_i(k)^{-1} \\
  &\geq \gamma_{i, k} \bar{\Pi}_i(k)^{-1}+\left(1-\gamma_{i, k}\right) \eta_j J_1^{\top} J_1 A^{-\top}\left(\check{H}_{j}(k-1)^{\top} \check{R}_j^{-1} \check{H}_{j}(k-1)\right) A^{-1} J_1^{\top} J_1 +\gamma_{i,k} H_{i}(k)^{\top} R_{i}^{-1} H_{i}(k)\\
  &\triangleq  \gamma_{i, k} \bar{\Pi}_i(k)^{-1}+\check{H}_i ^\top \check{R}_i^{-1}\check{H}_i(k)=\hat{\Pi}_i(k),
\end{align*}
where
$$
\check{H}_i(k) \triangleq\left[\begin{array}{c}
\check{H}_j(k-1) A^{-1} J_1^{\top} J_1 \\
H_i(k)
\end{array}\right] \quad \check{R}_i^{-1} \triangleq\left[\begin{array}{cc}
\left(1-r_i\right) \eta_j \check{R}_j^{-1} &0\\
0&\gamma_i R_i^{-1}
\end{array}\right].
$$
We repeat the above procedure to show $\hat{\Pi}_i(k) \geq \hat{P}_i(k)$
for $i\in \mathcal{V}_{3}$, and so on and so forth, which completes the proof.

\newpage

\bibliographystyle{unsrt}
\bibliography{ref}

\end{document}